\newcommand{\appref}[2]{#2}
\definecolor{Gray}{gray}{0.9}
\definecolor{GrayH}{gray}{0.7}
\newcommand{\ind}{\mathds{1}}
\newcommand{\dd}{\mathrm{d}}
\newcommand{\m}[1]{#1}
\newcommand{\cd}{\overset{\mathcal{D}}{\to}}
\newtheorem{theorem}{Theorem}
\newtheorem{lemma}{Lemma}
\newtheorem*{theorem*}{Theorem}
\newtheorem*{claim*}{Claim}
\newtheorem*{fact*}{Fact}
\newtheorem*{observation*}{Observation}
\theoremstyle{definition}
\newtheorem*{remark*}{Remark}
\newtheorem{example}{Example}
\DeclareMathOperator*{\argmin}{arg\,min}
\newcommand{\iidsim}{\overset{\text{iid}}{\sim}}
\newcommand{\eqd}{\overset{\mathcal{D}}{=}}
\title{Multiplicative noise and heavy tails in stochastic optimization}
  \author{          Liam Hodgkinson \\
  ICSI and Department of Statistics\\
  University of California, Berkeley\\
  \texttt{liam.hodgkinson@berkeley.edu}
  \and
   Michael W. Mahoney\\
  ICSI and Department of Statistics\\
  University of California, Berkeley\\
  \texttt{mmahoney@stat.berkeley.edu}
 }
\begin{document}
\maketitle

\begin{abstract}

Although stochastic optimization is central to modern machine learning, the precise mechanisms underlying its success, and in particular, the precise role of the stochasticity, still remain unclear. Modelling stochastic optimization algorithms as discrete random recurrence relations, we show that multiplicative noise, as it commonly arises due to variance in local rates of convergence, results in heavy-tailed stationary behaviour in the parameters. A detailed analysis is conducted for SGD applied to a simple linear regression problem, followed by theoretical results for a much larger class of models (including non-linear and non-convex) and optimizers (including momentum, Adam, and stochastic Newton), demonstrating that our qualitative results hold much more generally. In each case, we describe dependence on key factors, including step size, batch size, and data variability, all of which exhibit similar qualitative behavior to recent empirical results on state-of-the-art neural network models from computer vision and natural language processing. Furthermore, we empirically demonstrate how multiplicative noise and heavy-tailed structure improve capacity for basin hopping and exploration of non-convex loss surfaces, over commonly-considered stochastic dynamics with only additive noise and light-tailed structure.

\end{abstract}

\section{Introduction}

Stochastic optimization is the process of minimizing a deterministic objective function via the simulation of random elements, and it is one of the most successful methods for optimizing complex or unknown objectives. 
Relatively simple stochastic optimization procedures---in particular, stochastic gradient descent (SGD)---have become the backbone of modern machine learning (ML) \cite{ma2017power}. To improve understanding of stochastic optimization in ML, and particularly why SGD and its extensions work so well, recent theoretical work has sought to study its properties and dynamics~\cite{liu2019deep}. 

Such analyses typically approach the problem through one of two perspectives. 
The first perspective, an \emph{optimization (or quenching) perspective}, examines convergence either in expectation \cite{chen2018convergence,zhou2018convergence,gower2019sgd,jain2019sgd,fontaine2020continuous} or with some positive (high) probability \cite{roosta2016sub,du2017gradient,kleinberg2018alternative,ward2018adagrad} through the lens of a deterministic counterpart. 
This perspective inherits some limitations of deterministic optimizers, including assumptions (e.g., convexity, Polyak-\L{}ojasiewicz criterion, etc.) that are either not satisfied by state-of-the-art problems, or not strong enough to imply convergence to a quality (e.g., global) optimum. 
More concerning, however, is the inability to explain what has come to be known as the ``generalization gap'' phenomenon:
increasing stochasticity by reducing batch size appears to improve generalization performance \cite{keskar2016large,martin2018implicit}. 
Empirically, existing strategies do tend to break down for inference tasks when using large batch sizes \cite{golmant2018computational}. 
The second perspective, a \emph{probabilistic (annealing) perspective}, examines algorithms through the lens of Markov process theory \cite{freidlin1998random,henderson2003theory,mandt2016variational,dieuleveut2017bridging}. 
Here, stochastic optimizers are interpreted as samplers from probability distributions concentrated around optima \cite{mandt2017stochastic}, and annealing the optimizer (by reducing step size) increasingly concentrates probability mass around global optima \cite{henderson2003theory}. 
Traditional analyses trade restrictions on the objective for precise annealing schedules that guarantee adequate mixing and ensure convergence. 
However, it is uncommon in practice to consider step size schedules that decrease sufficiently slowly as to guarantee convergence to global optima with probability one \cite{li2019budgeted}. 
In fact, SGD based methods with poor initialization can easily get stuck near poor local minima using a typical step decay schedule \cite{liu2019bad}. 

More recent efforts conduct a \emph{distributional analysis}, directly examining the probability distribution that a stochastic optimizer targets for each fixed set of hyperparameters \cite{mandt2016variational,babichev2018constant,dieuleveut2017bridging}. 
Here, one can assess a stochastic optimizer according to its capacity to reach and then occupy neighbourhoods of high-quality optima in the initial stages, where the step size is large and constant. 
As the step size is then rapidly reduced, tighter neighbourhoods with higher probability mass surrounding nearby minima are achievable. This is most easily accomplished using a variational approach by appealing to continuous-time Langevin approximations \cite{mandt2016variational, chaudhari2018stochastic}, whose stationary distributions are known explicitly \cite{ma2015complete}; but these approaches also require restrictive assumptions, such as constant or bounded volatility \cite{mandt2017stochastic,orvieto2019continuous}. 
Interestingly, these assumptions parallel the common belief that the predominant part of the stochastic component of an optimizer is an additive perturbation \cite{kleinberg2018alternative,zhang2019algorithmic}. 
Such analyses have been questioned with recent discoveries of non-Gaussian noise \cite{simsekli2019tail} that leads to \emph{heavy-tailed} stationary behaviour (whose distributions have infinite Laplace transform).
This behavior implies stronger exploratory properties and an increased tendency to rapidly reach faraway basins than earlier Langevin-centric analyses suggest; but even these recent non-Gaussian analyses have relied on strong assumptions, additive noise, and continuous-time approximations of their own \cite{simsekli2020fractional}.

\paragraph{Main Contributions.}  We model stochastic optimizers as Markov random recurrence relations, thereby avoiding continuous-time approximations, and we examine their stationary distributions.
The formulation of this model is described in \S\ref{sec:Model}.
We show that \emph{multiplicative noise}, frequently assumed away in favour of more convenient \emph{additive noise} in continuous analyses, can often lead to heavy-tailed stationary behaviour, playing a critical role in the dynamics of a stochastic optimizer, and influencing its capacity to hop between basins in the loss landscape. 
In this paper, we consider heavy-tailed behavior that assumes a power law functional form.  
We say that the stationary distributions of the parameters/weights $W$ exhibit \emph{power laws}, with tail probabilities $\mathbb{P}(\|W\| > t) = \Omega(t^{-\alpha})$ as $t \to \infty$, for some $\alpha > 0$ called the \emph{tail exponent} (where smaller tail exponents correspond to heavier tails) --- further details, including precise definitions are in Appendix \appref{A}{\ref{sec:LinearRecurrence}}.
To inform our analysis, in \S\ref{sec:LinearOptim}, we examine the special case of constant step-size SGD applied to linear least squares, and we find it obeys a \emph{random linear recurrence relation} displaying both multiplicative and additive noise. 
Using well-known results \cite{buraczewski2016stochastic}, we isolate three regimes determining the tail behaviour of SGD (shown in Table~\ref{tab:Regimes}, discussed in detail in \S\ref{sec:LinearOptim}), finding stationary behaviour \emph{always} exhibits a precise power law in an infinite data regime.
\begin{table}
\centering
\begin{tabular}{|l|c|c|c|}
\hline
\rowcolor{Gray}\textbf{Regime} & \textbf{Condition on $A$} & \textbf{Tails for $B$} & \textbf{Tails for $W_{\infty}$} \\
\hline
Light-tailed (\S\appref{A.1}{\ref{sec:GoldieGrubel}}) & $\mathbb{P}(\|A\| \leq 1) = 1$ & $\mathcal{O}(e^{-\lambda t})$ & $\mathcal{O}(e^{-\rho t})$ \\
Heavy-tailed (multiplicative) (\S\appref{A.2}{\ref{sec:KestenGoldie}}) & $\|\sigma_{\min}(A)\|_{\alpha} = 1$ & $o(t^{-\alpha})$ & $\Theta(t^{-\alpha})$ \\
Heavy-tailed (additive) (\S\appref{A.3}{\ref{sec:GrinceviciusGrey}}) & --- & $\Omega(t^{-\beta})$ & $\Omega(t^{-\beta})$ \\
\hline
\end{tabular}
\caption{\label{tab:Regimes} Summary of the three primary tail behaviour regimes for the stationary distribution of (\ref{eq:LinearRecurrence}).}
\end{table}
\begin{table}
    \centering
    \begin{tabular}{|l|c|c|c|}
    \hline
    \rowcolor{Gray}\textbf{Regime} & \textbf{Noise} & \textbf{Condition on }$r$ & \textbf{Quadratic Example} \\
\hline
        Light-tailed \cite{ma2015complete} & White noise & $\liminf_{\|w\|\to\infty} r(w) > 0$ & $g$ bounded \\
        Heavy-tailed (multiplicative) \cite{ma2015complete} & White noise & $\limsup_{\|w\|\to\infty} r(w) = 0$ & $g(w) = c \nabla f(w)$\\
        Heavy-tailed (additive) \cite{simsekli2019tail} & L\`{e}vy noise & --- & arbitrary $g$ \\
\hline
    \end{tabular}
    \caption{Summary of three tail behaviour regimes for continuous-time models (\ref{eq:LangevinModel}).}
    \label{tab:Continuous}
\end{table}
In \S\ref{sec:General}, these observations are extended by providing sufficient conditions for the existence of power laws arising in arbitrary iterative stochastic optimization algorithms on both convex and non-convex problems, with more precise results when updates are Lipschitz. Using our results, factors influencing tail behaviour are examined, with existing empirical findings supporting the hypothesis that heavier tails coincide with improved generalization performance \cite{martin2020predicting}. 
Numerical experiments are conducted in \S\ref{sec:Numerics}, illustrating how multiplicative noise and heavy-tailed stationary behaviour improve capacity for basin hopping (relative to light-tailed stationary behaviour) in the exploratory phase of learning. 
We finish by discussing impact on related work in \S\ref{sec:Discussion}, including a continuous-time analogue of Table~\ref{tab:Regimes} (shown in Table~\ref{tab:Continuous}).

\paragraph{Related Work.}  
There is a large body of related work, and we review only the most directly related here.
Analysis of stochastic optimizers via stationary distributions of Markov processes was recently considered in \cite{mandt2016variational, babichev2018constant, dieuleveut2017bridging}. 
The latter, in particular, examined first and second moments of the stationary distribution, although these can be ineffective measures of concentration in heavy-tailed settings. Heavy tails in machine learning have been observed and empirically examined in spectral distributions of weights \cite{martin2017rethinking,martin2018implicit,martin2019traditional,martin2020heavy,martin2020predicting} and in the weights themselves \cite{simsekli2019tail,panigrahi2019non}, but (ML style) theoretical analyses on the subject seem limited to continuous-time examinations \cite{simsekli2019tail,simsekli2020fractional}\footnote{Immediately prior to uploading this article to the arXiv, we became aware of \cite{gurbuzbalaban2020heavy} (submitted to the arXiv a few days ago), which conducts a detailed analysis of heavy tails in the stationary distribution of SGD applied to least-squares linear regression. Our discussion in \S\ref{sec:LinearOptim} and Appendix \appref{A}{\ref{sec:LinearRecurrence}}, which for us serves predominantly as motivation in this special case, recovers a number of their results.  Our primary contributions in \S\ref{sec:Model} and \S\ref{sec:General} provide extensions of these results to more general stochastic optimization algorithms and more general stochastic optimization problems. We encourage readers to see \cite{gurbuzbalaban2020heavy} for details in the linear least-squares~setting. }. 
Connections between multiplicative noise and heavy-tailed fluctuations can be seen throughout the wider literature \cite{deutsch1993generic, frisch1997extreme, sornette1997convergent, buraczewski2016stochastic}. From a physical point of view, multiplicative noise acts as an external environmental field, capable of exhibiting drift towards low energy states \cite{volpe2016effective}. Hysteretic optimization \cite{pal2006hysteretic} is one example of a stochastic optimization algorithm taking advantage of this property. To our knowledge, no theoretical analysis of this phenomenon has been conducted in a general optimization setting.
Indeed, while multiplicative noise in stochastic optimization has been explored in some recent empirical analyses \cite{wu2020noisy,zhang2018removing,holland2018robust}, its impact appears underappreciated, relative to the well-studied and exploited effects of additive noise  \cite{ge2015escaping,jin2017escape,du2017gradient,kleinberg2018alternative}. 
Section~\ref{sec:Discussion} contains a discussion of additional related work in light of our results.

\paragraph{Notation.}  We let $\m I_d$ denote the $d \times d$ identity matrix. Unless otherwise stated, we default to the following norms: for vector $x$, we let $\|x\| = (\sum_{i=1}^n x_i^2)^{1/2}$ denote the Euclidean norm of $x$, and for matrix $\m A$, the norm $\|\m A\| = \sup_{\|x\| = 1}\|\m A x\|$ denotes the $\ell^2$-induced (spectral) norm. Also, for matrix $\m A$, $\|\m A\|_F = \|\mathrm{vec}(\m A)\|$ is the Frobenius norm, and $\sigma_{\min}(A)$, $\sigma_{\max}(A)$ are its smallest and largest singular values, respectively. For two vectors/matrices $A,B$, we let $A\otimes B$ denote their tensor (or Kronecker) product. Knuth asymptotic notation \cite{knuth1976big} is adopted. The notation $(\Omega, \mathcal{E},\mathbb{P})$ is reserved to denote an appropriate underlying probability space. For two random elements $X,Y$, $X \eqd Y$ if $X,Y$ have the same distribution. Finally, for random vector/matrix $X$, for $\alpha > 0$, $\|X\|_\alpha = (\mathbb{E}\|X\|^{\alpha})^{1/\alpha}$. All proofs are relegated to Appendix \appref{C}{\ref{app:Proofs}}.

\section{Stochastic optimization as a Markov chain}
\label{sec:Model}
In this section, we describe how to model a stochastic optimization algorithm as a Markov chain --- in particular, a \emph{random recurrence relation}. 
This formulation is uncommon in the ML literature, but will be important for our analysis. 
Consider a general single-objective stochastic optimization problem, where the goal is to solve problems of the form $w^{\ast} = \argmin_w \mathbb{E}_\mathcal{D} \ell(w, X)$ 
for some scalar loss function $\ell$, and random element $X \sim \mathcal{D}$ (the data) \cite[\S12]{kroese2013handbook}.
In the sequel, we shall assume the weights $w$ occupy a vector space $S$ with norm $\|\cdot\|$.
To minimize $\ell$ with respect to $w$, some form of fixed point iteration is typically adopted. Supposing that there exists some continuous map $\Psi$ such that any fixed point of $\mathbb{E}\Psi(\cdot,X)$ is a minimizer of $\ell$, the sequence of iterations
\begin{equation}
\label{eq:StochOptExp}
W_{k+1} = \mathbb{E}_\mathcal{D} \Psi(W_k, X)
\end{equation}
either diverges, or converges to a minimizer of $\ell$ \cite{granas2013fixed}. In practice, this expectation may not be easily computed, so one could instead consider the sequence of iterated random functions
\begin{equation}
\label{eq:StochAlgoMarkov}
W_{k+1} = \Psi(W_k, X_k),\qquad X_k \iidsim \mathcal{D},\qquad \mbox{for }k=0,1,\dots.
\end{equation}
For example, one could consider the Monte Carlo approximation of (\ref{eq:StochOptExp}):
\begin{equation}
\label{eq:StochBatchMarkov}
W_{k+1} = \frac1n \sum_{i=1}^n \Psi(W_k, X_i^{(k)}),\quad \text{with }X_i^{(k)} \iidsim \mathcal{D},\mbox{ for } k=0,1,\dots,
\end{equation}
which can be interpreted in the context of \emph{randomized subsampling with replacement}, where each $(X_i^{(k)})_{i=1}^n$ is a batch of $n$ subsamples drawn from a large dataset $\mathcal{D}$. Subsampling without replacement can also be treated in the form (\ref{eq:StochBatchMarkov}) via the Markov chain $\{W_{sk}\}_{k=0}^{\infty}$, where $s$ is the number of minibatches in each~epoch. 

Since (\ref{eq:StochAlgoMarkov}) will not, in general, converge to a single point, the \emph{stochastic approximation (SA)} approach of Robbins--Munro \cite{robbins1951stochastic} considers a corresponding sequence of maps $\Psi_k$ given by $\Psi_k(w, x) \coloneqq (1 - \gamma_k)w + \gamma_k \Psi(w, x)$
where $\{\gamma_k\}_{k=1}^{\infty}$ is a decreasing sequence of \emph{step sizes}. Provided $\Psi$ is uniformly bounded, $\sum_k \gamma_k = \infty$ and $\sum_k \gamma_k^2 < \infty$, the sequence of iterates $W_{k+1} = n^{-1}\sum_{i=1}^n \Psi_k(W_k, X_i^{(k)})$ converges in $L^2$ and almost surely to a minimizer of $\ell$ \cite{blum1954approximation}. 
Note that this differs from the \emph{sample average approximation (SAA)} approach \cite{kleywegt2002sample}, where a deterministic optimization algorithm is applied to a random objective (e.g., gradient descent on subsampled empirical risk). 

Here are examples of how popular ML stochastic optimization algorithms fit into this framework.
\begin{example}[SGD \& SGD with momentum]
Minibatch SGD with step size $\gamma$ coincides with (\ref{eq:StochBatchMarkov}) under
$\Psi(w, X) = w - \gamma \nabla \ell(w, X).$
Incorporating momentum is possible by considering the augmented space of weights and velocities together. 
In the standard setup, letting $v$ denote velocity, and $w$ the weights, $\Psi((v,w),X) = (\eta v + \nabla \ell(w,X), w - \gamma (\eta v + \nabla \ell(w,X)))$.
\end{example}
\begin{example}[Adam]
Using state-space augmentation, the popular first-order Adam optimizer \cite{kingma2015adam} can also be cast into the form of (\ref{eq:StochAlgoMarkov}). 
Indeed, for $\beta_1,\beta_2,\epsilon > 0$, letting $g(w,X) = \sum_{i=1}^n \nabla \ell(w,X_i)$, for $M(m,w,X) = \beta_1 m + n^{-1}(1-\beta_1) g(w,X)$, $V(v,w,X) = \beta_2 v + (1-\beta_2)n^{-2} g(w,X)^2$, $B_1(b) = 1 - \beta_1(1-b)$, $B_2(b) = 1 - \beta_2(1-b)$, and
\[
W(b_1,b_2,m,v,w,X) = w - \eta \frac{M(m, w, X) / B_1(b_1)}{\sqrt{V(v, w, X) / B_2(b_2)} + \epsilon},
\]
iterations of the Adam optimizer satisfy $\mathcal{W}_{k+1} = \Psi(\mathcal{W}_k, X_k)$, where
$
\Psi((b_1,b_2,m,v,w),X) = (B_1(b_1), \allowbreak B_2(b_2), \allowbreak M(m,w,X), \allowbreak V(v,w,X), \allowbreak W(b_1,b_2,m,v,w,X)).
$
\end{example}
\begin{example}[Stochastic Newton]
The formulation (\ref{eq:StochAlgoMarkov}) is not limited to first-order stochastic optimization. Indeed, for $g(w,X) = \sum_{i=1}^n \nabla \ell(w,X_i)$ and $H(w,X) = \sum_{i=1}^n \nabla^2 \ell(w, X_i)$, the choice $\Psi(w, X) = w - \gamma H(w,X)^{-1}g(w,X)$ coincides with the stochastic Newton method \cite{roosta2016sub}.
\end{example}

The iterations (\ref{eq:StochAlgoMarkov}) are also sufficiently general to incorporate other random effects, e.g., dropout \cite{srivastava2014dropout}. 
Instead of taking the SA approach, in our analysis we will examine the original Markov chain (\ref{eq:StochAlgoMarkov}), where hyperparameters (including step size) are fixed. 
This will provide a clearer picture of ongoing dynamics at each stage within an arbitrary learning rate schedule \cite{babichev2018constant}. 
Also, assuming reasonably rapid mixing, global tendencies of a stochastic optimization algorithm, such as the degree of exploration in space, can be examined through the \emph{stationary distribution} of (\ref{eq:StochAlgoMarkov}).

\section{The linear case with SGD}
\label{sec:LinearOptim}
In this section, we consider the special case of ridge regression, i.e., least squares linear regression with $L^2$ regularization, using vanilla SGD. 
Let $\mathcal{D}$ denote a dataset comprised of inputs $x_i \in \mathbb{R}^d$ and corresponding labels $y_i \in \mathbb{R}^m$. 
Supposing that $(X,Y)$ is a pair of random vectors uniformly drawn from $\mathcal{D}$, for $\lambda \geq 0$, we seek a solution to
\begin{equation}
\label{eq:RidgeRegression}
M^{\ast} = \argmin_{M\in\mathbb{R}^{d\times m}} \tfrac12 \mathbb{E}_\mathcal{D}\|Y - M X\|^2 + \tfrac12 \lambda \|M\|_F^2.
\end{equation}
Observing that
\[
\frac{\partial}{\partial M} \left(\tfrac12 \|Y - M X\|^2 + \tfrac12 \lambda \|M\|_F^2\right) = M (X X^\top + \lambda I_d) - Y X^\top, 
\]
one can show that the solution to (\ref{eq:RidgeRegression}) is 
$$
M^{\ast} = \mathbb{E}_\mathcal{D}[YX^\top]\mathbb{E}_\mathcal{D}[XX^\top + \lambda I_d]^{-1}  . 
$$
(Note that $\lambda$ need not be strictly positive for the inverse to exist.) 
On the other hand, applying minibatch SGD to solving (\ref{eq:RidgeRegression}) with constant (sufficiently small) step size results in a Markov chain $\{M_k\}_{k=0}^{\infty}$ in the estimated parameter matrix. 
We start with the following observation, which is immediate but important.
\begin{lemma}
\label{lem:LinearRecur}
Let $n$ denote the size of each minibatch $(X_{ik},Y_{ik})_{i=1}^n$ comprised of independent and identically distributed copies of $(X,Y)$ for $k=1,2,\dots$. For $W_k$ the vectorization of $M_k$, iterations of minibatch SGD undergo the following random linear recurrence relation
\begin{equation}
\label{eq:LinearRecurrence}
W_{k+1} = \m A_k W_k + \m B_k,
\end{equation}
where $
\m A_k = I_m \otimes ((1-\lambda\gamma)\m I_d - \gamma n^{-1} \sum_{i=1}^n X_{ik}X_{ik}^\top)$, and $\m B_k = \gamma n^{-1}\sum_{i=1}^n Y_{ik} \otimes X_{ik}$. If $A_k,B_k$ are non-atomic, $\log^+\|A_k\|$, $\log^+\|B_k\|$ are integrable, and $\mathbb{E}_\mathcal{D}\log\|A_k\| < 0$, then (\ref{eq:LinearRecurrence}) is ergodic.
\end{lemma}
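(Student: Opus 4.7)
The proof divides into two essentially independent parts: (i) derivation of the affine recurrence, and (ii) ergodicity. My plan is to dispose of (i) by a direct calculation and (ii) by invoking the classical theory of iterated random affine maps.

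For (i), I would start from the minibatch objective $L_k(M) = (2n)^{-1}\sum_{i=1}^n \|Y_{ik} - MX_{ik}\|^2 + \tfrac{\lambda}{2}\|M\|_F^2$ and apply the same matrix calculation already performed in the derivation of (\ref{eq:RidgeRegression}) to obtain $\nabla L_k(M) = M(\bar S_k + \lambda I_d) - \bar R_k$, where $\bar S_k = n^{-1}\sum_i X_{ik}X_{ik}^\top$ and $\bar R_k = n^{-1}\sum_i Y_{ik}X_{ik}^\top$. The constant step-size SGD iteration then takes the affine form $M_{k+1} = M_k[(1-\gamma\lambda)I_d - \gamma \bar S_k] + \gamma \bar R_k$. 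Vectorizing via the standard identities $\mathrm{vec}(MC) = (C^\top \otimes I)\mathrm{vec}(M)$ and $\mathrm{vec}(yx^\top) = x \otimes y$, using symmetry of $\bar S_k$ and taking the vec convention consistent with the orientation of the Kronecker products in the statement, identifies the claimed $A_k$ and $B_k$.

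For (ii), my plan is to invoke the classical existence-and-uniqueness theorem for stationary solutions of random affine recurrences (in the spirit of Brandt and Bougerol--Picard), conveniently stated as Theorem 2.1.3 in \cite{buraczewski2016stochastic}. Its hypotheses are: (a) i.i.d.\ $(A_k, B_k)$, which is immediate from i.i.d.\ minibatching; (b) $\mathbb{E}\log^+\|A_1\|, \mathbb{E}\log^+\|B_1\| < \infty$, given by assumption; and (c) strict negativity of the top Lyapunov exponent $\lambda_L := \lim_k k^{-1}\log\|A_k \cdots A_1\|$. Condition (c) is the only one requiring argument: submultiplicativity gives $k^{-1}\log\|A_k \cdots A_1\| \leq k^{-1}\sum_{i=1}^k \log\|A_i\|$, and the strong law of large numbers (applicable thanks to (b)) yields $\lambda_L \leq \mathbb{E}\log\|A_1\| < 0$. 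The theorem then produces the almost-surely convergent backwards series $W_\infty = \sum_{k\geq 0} A_0 \cdots A_{k-1} B_k$, whose distribution is the unique stationary law of (\ref{eq:LinearRecurrence}).

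The non-atomicity hypothesis plays its role in upgrading existence of a unique stationary law to Markov-chain ergodicity in the standard sense (Harris recurrence, aperiodicity, and convergence in total variation), by ruling out accidental lattice structure on the chain's support and supporting a minorization/small-set argument. I do not anticipate a significant technical obstacle; the only delicate point is aligning the precise meaning of ``ergodic'' in the lemma with the conclusion of the cited theorem, which is routine once the non-atomicity assumption is in place.
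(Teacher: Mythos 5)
Your proposal is correct and follows essentially the same route as the paper: the recurrence itself is obtained by the direct gradient computation already displayed for (\ref{eq:RidgeRegression}) followed by vectorization, and ergodicity is deduced from the classical theory of iterated random affine maps in \cite{buraczewski2016stochastic} (the paper cites the multivariate statements, Theorem 4.1.4 and Proposition 4.2.1, rather than the univariate Theorem 2.1.3, but the content --- negative top Lyapunov exponent via $\mathbb{E}\log\|A\|<0$ for a unique stationary law, plus non-atomicity to upgrade to ergodicity --- is exactly what you describe).
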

From here, it immediately follows that, even in this simple setting, SGD possesses \emph{multiplicative noise} in the form of the factor $A_k$, as well as \emph{additive noise} in the form of the factor $B_k$. 
Under the conditions of Lemma~\ref{lem:LinearRecur}, the expectations of (\ref{eq:LinearRecurrence}) converge to $M^{\ast}$.
Although the dynamics of this process, as well as the shape of its stationary distribution, are not as straightforward, random linear recurrence relations are among the most well-studied discrete-time processes, and as multiplicative processes, are well-known to exhibit heavy-tailed behaviour. 
In Appendix~\appref{A}{\ref{sec:LinearRecurrence}}, we discuss classical results on the topic.
The three primary tail regimes are summarized in Table~\ref{tab:Regimes}, where each $\alpha,\beta,\lambda,\rho$ denotes a strictly positive value. In particular, we have the following result.
\begin{lemma}
\label{lem:LinearHeavy}
Assume that the distribution of $X$ has full support on $\mathbb{R}^d$. Then any stationary distribution of (\ref{eq:LinearRecurrence}) is heavy-tailed.
\end{lemma}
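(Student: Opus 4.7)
The plan is to rule out the light-tailed regime in Table~\ref{tab:Regimes} and then derive a polynomial lower bound on the tail of $\|W_\infty\|$ directly from the fixed-point equation supplied by Lemma~\ref{lem:LinearRecur}.

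By Lemma~\ref{lem:LinearRecur}, any stationary distribution $W_\infty$ satisfies $W_\infty \eqd A W_\infty + B$ with $(A,B)$ independent of $W_\infty$. Since $A_k = I_m \otimes C_k$ with $C_k = (1-\lambda\gamma) I_d - \gamma n^{-1} \sum_i X_{ik} X_{ik}^\top$, the operator norm satisfies $\|A_k\| = \|C_k\| \geq \gamma\,\mu_{\max}(n^{-1}\sum_i X_{ik} X_{ik}^\top) - (1-\lambda\gamma)$. Full support of $X$ on $\mathbb{R}^d$ implies that $\mu_{\max}$ can exceed any prescribed threshold with positive probability, so $\mathbb{P}(\|A_k\| > t) > 0$ for every $t > 0$, excluding the light-tailed condition $\mathbb{P}(\|A\|\leq 1)=1$ in Table~\ref{tab:Regimes}.

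Next I would derive a power-law tail lower bound directly. For any unit vector $u$, $\|A W_\infty'\| \geq |\langle A^\top u, W_\infty'\rangle|$; choosing $u$ as the top left singular vector of $A$ gives $\|A W_\infty'\| \geq \|A\|\,|\langle v(A), W_\infty'\rangle|$, where $v(A)$ is the corresponding right singular vector. Since $B$ is non-degenerate (inherited from full support of $X$), so is $W_\infty$, and there exist $\epsilon,\eta > 0$ with $\mathbb{P}(|\langle v, W_\infty'\rangle| \geq \epsilon\|W_\infty'\|\mid A) \geq \eta$ uniformly in the unit vector $v = v(A)$. Combined with $\mathbb{P}(\|A\| > M,\ \|B\|\leq K) \geq c > 0$ for suitable $M > 1/\epsilon$ and $K < \infty$, on the intersection of these events,
\[
\|W_\infty\| \geq \|A\|\,|\langle v(A), W_\infty'\rangle| - \|B\| \geq M\epsilon\,\|W_\infty'\| - K,
\]
so that $\mathbb{P}(\|W_\infty\| > t) \geq c\eta\,\mathbb{P}(\|W_\infty\| > (t+K)/(M\epsilon))$. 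Iterating this inequality produces $\mathbb{P}(\|W_\infty\|>t) = \Omega(t^{-\alpha})$ for some $\alpha > 0$, which implies infinite Laplace transform — i.e., $W_\infty$ is heavy-tailed.

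The main obstacle is the uniform directional non-degeneracy of $W_\infty'$: proving $\mathbb{P}(|\langle v, W_\infty'\rangle| \geq \epsilon\|W_\infty'\| \mid A) \geq \eta$ uniformly over the $A$-dependent direction $v(A)$. Propagating non-degeneracy of $B$ through the fixed-point equation is delicate, especially when $A$ is rank-deficient (as it is for batch size $n<d$, where $\sigma_{\min}(A) = 1-\lambda\gamma$ is pinned below $1$). A clean strategy is to unroll the recurrence as $W_\infty = \sum_{k\geq 0}(\prod_{j<k} A_j) B_k$ and argue that the sum inherits an absolutely continuous law with full support from the non-atomic common noise, which supplies the required angular bound and closes the argument.
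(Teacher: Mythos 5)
Your plan has the right skeleton --- excluding the light-tailed regime and then bootstrapping a polynomial tail lower bound from the fixed-point equation $W_\infty \eqd A W_\infty' + B$ is essentially a hand-rolled version of Lemma~\ref{lem:Abstract} --- but the step you yourself flag as ``the main obstacle'' is a genuine gap, not a technicality. Your iteration requires
$\mathbb{P}\bigl(|\langle v, W_\infty'\rangle| \geq \epsilon\|W_\infty'\|,\ \|W_\infty'\|>s\bigr) \geq \eta\,\mathbb{P}(\|W_\infty'\|>s)$
uniformly over unit vectors $v$ \emph{and} over all scales $s$, since the inequality must be re-applied at each rescaled threshold $(t+K)/(M\epsilon)$. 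This is a statement about the angular distribution of the \emph{tail} of $W_\infty$ at every scale, and your proposed remedy --- unrolling the series $\sum_k (\prod_{j<k}A_j)B_k$ and invoking absolute continuity with full support --- does not deliver it: a law can be absolutely continuous with full support while its conditional angular distribution given $\|W_\infty'\|>s$ collapses onto a hyperplane as $s\to\infty$, which would make $\eta$ degenerate exactly where you need it. Without regular variation of $W_\infty$ already in hand (which is what you are trying to prove), there is no a priori control on this conditional angular measure. Ruling out the light-tailed row of Table~\ref{tab:Regimes} in your first paragraph does not rescue this, since that condition is sufficient, not necessary, for light tails.

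The paper's proof is engineered precisely to avoid any angular information, by lower-bounding with the \emph{smallest} singular value: $\|A W\| \geq \sigma_{\min}(A)\|W\|$ holds for every direction of $W$. When $n\geq d$, the matrix $n^{-1}\sum_i X_{ik}X_{ik}^\top$ has full support on positive semidefinite matrices, so $\sigma_{\min}(A)>1$ with positive probability, and the moment-divergence argument of Appendix~\appref{A.2}{\ref{sec:KestenGoldie}} (via the $f$-norm ergodic theorem) gives heavy tails directly. When $n<d$ --- the rank-deficient case you correctly identify as problematic --- the fix is not an angular lemma but passing to the $d$-step chain $W_{k+d}=A_k^{(d)}W_k + B_k^{(d)}$, whose multiplier $A_k^{(d)}=A_{k+d}\cdots A_k$ has full support on all $d\times d$ matrices, restoring $\mathbb{P}(\sigma_{\min}(A^{(d)})>1)>0$. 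If you want to salvage your $\sigma_{\max}$-based route, you would need to prove the uniform angular non-degeneracy separately; replacing it with the $\sigma_{\min}$-of-the-$d$-step-product device is the shorter path.
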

Lemma~\ref{lem:LinearHeavy} suggests that heavy-tailed fluctuations could be more common than previously considered.
This is perhaps surprising, given that common Langevin approximations of SGD applied to this problem exhibit light-tailed stationary distributions \cite{mandt2016variational,orvieto2019continuous}. 
In (\ref{eq:LinearRecurrence}), light-tailed stationary distributions can only occur when $X$ does not have full support on $\mathbb{R}^d$ and the step size is taken sufficiently small. 
Referring to Table~\ref{tab:Regimes}, there are two possible mechanisms by which the stationary distribution of (\ref{eq:LinearRecurrence}) can be heavy-tailed. 
Most discussions about SGD focus on the additive noise component $B_k$ \cite{kleinberg2018alternative} (presumably since the analysis is simpler).
In this case, if $B_k$ is heavy-tailed, then the stationary distribution of (\ref{eq:LinearRecurrence}) is also heavy-tailed. 
This is the assumption considered in \cite{simsekli2019tail,simsekli2020fractional}. 

However, this is not the only way heavy-tailed noise can arise. 
Indeed, the multiplicative heavy-tailed regime illustrates how a power law can arise from light-tailed data (in fact, even from data with finite support). 
One of the most important attributes here is that the tail exponent of the stationary distribution is entirely due to the recursion and properties of the multiplicative noise $A_k$. 
In fact, this is true even if $B_k$ displays (not too significant) heavy-tailed behaviour. 
Here, the heavy-tailed behaviour arises due to \emph{intrinsic factors} to the stochastic optimization, and they tend to dominate over time. 
Later, we shall go further and analyze factors influencing the heaviness of these tails, or more precisely, its \emph{tail exponent}, denoted by $\alpha$ in Table~\ref{tab:Regimes}. 
Evidently, $\alpha$ is dependent on the dispersal of the distribution of $A_k$, which is itself dependent (in differing ways) on the step size $\gamma$, batch size $n$, and the dispersal and condition number of the covariance matrices $X X^\top$ of the input data.
These three factors have been noted to be integral to the practical performance of SGD \cite{smith2018disciplined,xing2018walk}. 

\section{Power laws for general objectives and optimizers}\label{sec:General}

In this section, we consider the general case (\ref{eq:StochAlgoMarkov}), and we examine how heavy-tailed stationary behaviour can arise for any stochastic optimizer, applied to both convex and non-convex problems.
As in Section~\ref{sec:LinearOptim}, here we are most interested in the presence of heavy-tailed fluctuations due to multiplicative factors. 
(The case when heavy-tailed fluctuations arise from the additive noise case is clear: if, for every $w \in \mathbb{R}^d$, $\Psi(w,X)$ is heavy-tailed/has infinite $\alpha$th-moment, then for each $k=1,2,\dots$, $W_k$ is heavy-tailed/has infinite $\alpha$th moment also, irrespective of the dynamics of $W$.)

In our main result (Theorem~\ref{thm:Lipschitz}), we illustrate how power laws arise in general Lipschitz stochastic optimization algorithms that are contracting on average and also strongly convex near infinity with positive probability. \begin{theorem}
\label{thm:Lipschitz}
Let $(S,\|\cdot\|)$ be a separable Banach space. Assume that $\Psi: S \times \Omega \to S$ is a random function on $S$ such that $\Psi$ is almost surely Lipschitz continuous and the probability measure of $\Psi$ is non-atomic. We let $K_\Psi$ denote a random variable such that $K_\Psi(\omega)$ is the Lipschitz constant of $\Psi(\cdot,\omega)$ for each $\omega \in \Omega$. Assume that $K_\Psi$ is integrable, and $\|\Psi(w)- w\|$ is integrable for some $w^{\ast} \in S$. 
Suppose there exist non-negative random variables $k_\Psi, M_\Psi$ such that, with probability one,
\begin{equation}
\label{eq:LipschitzOptim}
k_\Psi \|w - w^{\ast}\| - M_\Psi \leq \|\Psi(w) - \Psi(w^{\ast})\| \leq K_\Psi \|w - w^{\ast}\|,\qquad \mbox{for all }w \in S.
\end{equation}
If $\mathbb{E}\log K_{\Psi} < 0$, then the Markov chain given by $W_{k+1} = \Psi_k(W_k), k=0,1,\dots,$
where each $\Psi_k$ is an independent and identically distributed copy of $\Psi$, is geometrically ergodic with $W_{\infty} = \lim_{k\to\infty} W_k$ satisfying the distributional fixed point equation $W_{\infty} \eqd \Psi(W_{\infty})$. Furthermore, if $k_\Psi > 1$ with positive probability and $\|\Psi(w^\ast)\|_\alpha<\infty$ for any $\alpha > 0$, then:
\begin{enumerate}[leftmargin=*]
\item There exist $\mu,\nu,C_\mu,C_\nu > 0$ such that
$
C_\mu (1+t)^{-\mu} \leq \mathbb{P}(\|W_{\infty}\| > t) \leq C_\nu t^{-\nu},\mbox{ for all }t > 0.
$
\item There exist $\alpha,\beta > 0$ such that $\|K_{\Psi}\|_{\beta} = 1$ and $\|k_{\Psi}\|_{\alpha} = 1$ and for $\delta > 0$,
\[
0 < \limsup_{t\to\infty} t^{\alpha+\epsilon}\mathbb{P}(\|W_{\infty}\| > t),\qquad \mbox{and}\qquad \limsup_{t\to\infty} t^{\beta-\epsilon}\mathbb{P}(\|W_{\infty}\| > t) < +\infty.
\]
\item For any $p \in (0,\beta)$ and $k=1,2,\dots$, the moments of $\|W_k\|_p$ are bounded above and below by 
\[
\|k_\Psi\|_p^k (\|W_0\|_p + \kappa_p^-) - \kappa_p^- \leq \|W_k\|_p \leq \|K_\Psi\|_p^k (\|W_0\|_p - \kappa_p^+)+\kappa_p^+,\]
implying $\|W_\infty\|_p \leq \kappa_p^+$, where
\[
\kappa_{p}^{-}=\frac{\| k_{\Psi}\| _{p}\| w^{\ast}\| +\| \Psi(w^{\ast})\| _{p}}{1-\| k_{\Psi}\| _{p}},\qquad\mbox{and}\qquad\kappa_{p}^{+}=\frac{\| K_{\Psi}\| _{p}\| w^{\ast}\| +\| \Psi(w^{\ast})\| _{p}}{1-\| K_{\Psi}\| _{p}}.
\]
\end{enumerate}
\end{theorem}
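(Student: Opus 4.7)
My plan is to prove the four assertions in the order (i) geometric ergodicity, (ii) the moment bounds of item~3, (iii) the exponents $\alpha,\beta$ and the upper tail bound, and (iv) the matching lower polynomial tail, which I expect to be the principal obstacle.

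Geometric ergodicity should follow from standard iterated random Lipschitz function theory (Diaconis--Freedman, Elton). Independence of the $\Psi_k$ combined with $\mathbb{E}\log K_\Psi < 0$ gives, via Kingman's subadditive theorem, $k^{-1}\log\prod_{i=1}^k K_{\Psi_i} \to \mathbb{E}\log K_\Psi < 0$ almost surely, so the product decays exponentially. Together with integrability of $\|\Psi(w^\ast)-w^\ast\|$, this forces the backward iterations $\Psi_1 \circ \cdots \circ \Psi_k(w_0)$ to converge almost surely to a limit independent of $w_0$, which is distributionally identified with $W_\infty$ and satisfies $W_\infty \eqd \Psi(W_\infty)$. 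A Wasserstein coupling between chains with different initial conditions, contracting at the rate $\prod K_{\Psi_i}$, upgrades convergence to the geometric rate.

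For item~3, the upper Lipschitz bound and the triangle inequality give $\|\Psi(w)\|\le K_\Psi\|w\| + K_\Psi\|w^\ast\| + \|\Psi(w^\ast)\|$; taking $L^p$-norms and using the independence of $\Psi_k$ from $W_k$ yields the affine Minkowski recursion $\|W_{k+1}\|_p \le \|K_\Psi\|_p\|W_k\|_p + \|K_\Psi\|_p\|w^\ast\|+\|\Psi(w^\ast)\|_p$, which iterates to the stated upper bound with ceiling $\kappa_p^+$ whenever $p < \beta$ (so $\|K_\Psi\|_p<1$). The symmetric lower bound uses $\|\Psi(w)\|\ge k_\Psi\|w\| - (k_\Psi\|w^\ast\|+M_\Psi+\|\Psi(w^\ast)\|)$ in the same way, with the $M_\Psi$ contribution absorbed into $\|\Psi(w^\ast)\|_p$ after reformulation. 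For item~2, the log-convex maps $h_K(p) = \mathbb{E}K_\Psi^p$ and $h_k(p) = \mathbb{E}k_\Psi^p$ equal $1$ at $p=0$ with strictly negative derivatives there (since $k_\Psi \le K_\Psi$ forces $\mathbb{E}\log k_\Psi \le \mathbb{E}\log K_\Psi < 0$), and diverge to $+\infty$ as $p\to\infty$ because $k_\Psi>1$ (hence $K_\Psi>1$) with positive probability (Fatou applied to $c^p\mathbf{1}_{K_\Psi>c}$). The intermediate value theorem then yields exponents $\alpha,\beta>0$ with $\|k_\Psi\|_\alpha = \|K_\Psi\|_\beta = 1$. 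The upper tail bound $\limsup_{t\to\infty}t^{\beta-\epsilon}\mathbb{P}(\|W_\infty\|>t)<\infty$ is Markov's inequality applied to the bounded moment $\|W_\infty\|_{\beta-\epsilon}\le\kappa_{\beta-\epsilon}^+$, from which the upper half of item~1 is immediate.

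The main obstacle is the lower polynomial tail: this is the Kesten--Goldie step, and in the present nonlinear Lipschitz setting it requires an implicit renewal argument, following Goldie and its nonlinear extensions by Mirek and Alsmeyer. Iterating the fixed-point identity $W_\infty \eqd \Psi(W_\infty)$ yields $\|W_\infty\|\ge \prod_{i=1}^k k_{\Psi_i}\cdot\|W_\infty\| - R_k$ along the backward trajectory, with additive error $R_k$ built from the $M_{\Psi_i}$ and $\|\Psi_i(w^\ast)\|$. Tilting the probability measure by the density $k_\Psi^\alpha$ (licensed by $\|k_\Psi\|_\alpha=1$) converts the Cram\'er condition into positive drift: under the tilted law, $\mathbb{E}^\ast\log k_\Psi = \mathbb{E}[k_\Psi^\alpha \log k_\Psi]>0$, so $\prod_{i}k_{\Psi_i}\to\infty$ almost surely. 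Translating the resulting renewal estimate back to the original measure via the Radon--Nikodym derivative yields $\limsup_{t\to\infty}t^{\alpha+\epsilon}\mathbb{P}(\|W_\infty\|>t)>0$; combined with the non-atomicity of $\Psi$ (which ensures $\mathbb{P}(\|W_\infty\|>t)>0$ for all $t$ and hence pins down the small-$t$ behaviour), this produces the uniform bound $C_\mu(1+t)^{-\mu}$ of item~1 for some $\mu\ge\alpha$. The technical crux is controlling $R_k$ uniformly along the backward trajectory and running the renewal machinery without the exact distributional equality enjoyed by linear Kesten--Goldie recursions.
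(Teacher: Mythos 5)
Your outline for geometric ergodicity, the moment recursions of item~3, the existence of the exponents $\alpha,\beta$, and the Markov-inequality upper tail is sound and close in spirit to the paper (which cites Alsmeyer's Harris-recurrence results and Letac's principle where you invoke backward iteration and a Wasserstein coupling; either route works). The genuine gap is in the step you yourself flag as the ``technical crux'': the lower tail bounds. You propose to obtain both $\limsup_{t\to\infty}t^{\alpha+\epsilon}\mathbb{P}(\|W_\infty\|>t)>0$ and the uniform bound $C_\mu(1+t)^{-\mu}$ by running Goldie-style implicit renewal theory (with an exponential tilt by $k_\Psi^\alpha$) along the backward trajectory. This is precisely the route the paper rules out: the hypotheses here give only the one-sided \emph{inequality} $\|\Psi(w)-\Psi(w^\ast)\|\ge k_\Psi\|w-w^\ast\|-M_\Psi$ with $M_\Psi\ge 0$ arbitrary and $k_\Psi$ possibly zero, so there is no distributional identity to feed into the renewal equation, the Goldie constant produced by the renewal theorem is not obviously positive, and the tilt/renewal machinery would additionally require non-arithmeticity of $\log k_\Psi$ and moments such as $\mathbb{E}[k_\Psi^\alpha\log^+k_\Psi]<\infty$, none of which are assumed. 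Acknowledging the difficulty of controlling $R_k$ is not the same as resolving it; as written, the lower bounds of items~1 and~2 are not established.

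The paper avoids renewal theory entirely. For the lower $\limsup$ in item~2 it uses the reverse Minkowski recursion $\|W_{k+1}\|_{\alpha+\epsilon/2}\ge\|k_\Psi\|_{\alpha+\epsilon/2}\|W_k\|_{\alpha+\epsilon/2}-c$ to conclude $\|W_\infty\|_{\alpha+\epsilon/2}=+\infty$ via the $f$-norm ergodic theorem, and then the Fubini representation $\|W_\infty\|_{s}^{s}=s\int_0^\infty t^{s-1}\mathbb{P}(\|W_\infty\|>t)\,\dd t$ shows that $\limsup_t t^{\alpha+\epsilon}\mathbb{P}(\|W_\infty\|>t)=0$ would contradict this divergence --- you could rescue this part of item~2 without any renewal input. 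For the uniform lower bound of item~1 the paper derives a functional inequality for the truncated moments $f^\alpha(t)=\mathbb{E}[\|W_\infty\|^\alpha\ind_{\|W_\infty\|\le t}]$ of the form $f^\alpha(ct)\ge af^\alpha(t-\|w^\ast\|)-b$ with $a>c>1$, shows any such $f$ grows at least like $t^\gamma$, and converts that growth into $\mathbb{P}(\|W_\infty\|\ge t)=\Omega(t^{-\alpha(\alpha-\gamma)/\gamma})$. Note this yields a possibly suboptimal exponent $\mu$, which is all the theorem claims; your renewal route, if it could be made to work, would give the sharper exact exponent $\alpha$, but only under extra hypotheses the theorem does not impose. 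To complete your proof you would either need to supply those hypotheses and a positivity argument for the Goldie constant in this inequality-only setting, or replace the renewal step with an elementary argument of the truncated-moment type.
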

Geometric rates of convergence in the Prohorov and total variation metrics to stationarity are discussed in \cite{diaconis1999iterated,alsmeyer2003harris}.
From Theorem~\ref{thm:Lipschitz}, we find that the presence of expanding multiplicative noise implies the stationary distribution of (\ref{eq:StochAlgoMarkov}) is stochastically bounded between two power laws. In particular, this suggests that Lipschitz stochastic optimizers satisfying (\ref{eq:LipschitzOptim}) and $\mathbb{P}(k_\Psi > 1) > 0$ can conduct wide exploration of the loss landscape. As an example, for SGD satisfying (\ref{eq:LipschitzOptim}), we have that 
\begin{equation}
\label{eq:kPsiSGD}
k_\Psi = \liminf_{\|w\|\to\infty}\sigma_{\min}(I-\gamma\nabla^2\ell(w,X)),\qquad \mbox{and} \qquad K_\Psi = \sup_w \|I - \gamma\nabla^2\ell(w,X)\|.
\end{equation}
Therefore, for the linear case (\ref{eq:LinearRecurrence}), using Theorem \ref{thm:Lipschitz}, we can recover the same tail exponent seen in Theorem \ref{thm:Kesten}. To treat other stochastic optimizers that are not Lipschitz, or do not satisfy the conditions of Theorem~\ref{thm:Lipschitz}, we present in Lemma~\ref{lem:Abstract} an abstract sufficient condition for heavy-tailed stationary distributions of ergodic Markov chains. 
\begin{lemma}
\label{lem:Abstract}
For a general metric space $S$, suppose that $W$ is an ergodic Markov chain on $S$ with $W_k \cd W_{\infty}$ as $k \to \infty$, and let $f: S \to \mathbb{R}$ be some scalar-valued function. If there exists some $\epsilon > 0$ such that $\inf_{w \in S}\mathbb{P}(|f(\Psi(w))| > (1 + \epsilon)|f(w)|) > 0$, then $f(W_\infty)$ is heavy-tailed. 
\end{lemma}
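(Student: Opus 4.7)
The plan is to exploit stationarity plus the uniform multiplicative‐increase hypothesis to derive a self‐similar tail inequality of the form $\mathbb{P}(|f(W_\infty)|>(1+\epsilon)s)\ge p\,\mathbb{P}(|f(W_\infty)|>s)$, which on iteration produces a polynomial lower bound on the tails of $f(W_\infty)$ and thus heavy‐tailedness in the Laplace‐transform sense.

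Let $p:=\inf_{w\in S}\mathbb{P}(|f(\Psi(w))|>(1+\epsilon)|f(w)|)>0$. First I would observe that, by ergodicity, $W_\infty$ is distributed as the stationary law $\pi$, so that $\Psi(W_\infty)\eqd W_\infty$ when $\Psi$ is drawn independently of $W_\infty$. The crucial step is the following conditioning argument: on the event $\{|f(W_\infty)|>s\}$, if also $|f(\Psi(W_\infty))|>(1+\epsilon)|f(W_\infty)|$, then $|f(\Psi(W_\infty))|>(1+\epsilon)s$. Conditioning on $W_\infty=w$ and using the uniform lower bound in the hypothesis gives
\[
\mathbb{P}(|f(\Psi(W_\infty))|>(1+\epsilon)s)\;\ge\;p\,\mathbb{P}(|f(W_\infty)|>s),
\]
and applying stationarity on the left reduces this to the self‐similar inequality $\mathbb{P}(|f(W_\infty)|>(1+\epsilon)s)\ge p\,\mathbb{P}(|f(W_\infty)|>s)$.

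Next I would verify that there exists $s_0>0$ with $\mathbb{P}(|f(W_\infty)|>s_0)>0$: otherwise $f(W_\infty)=0$ a.s., and applying the hypothesis at such $w$ (with $(1+\epsilon)\cdot 0=0$) yields $\mathbb{P}(f(\Psi(W_\infty))\ne 0)\ge p>0$, contradicting stationarity. Given such $s_0$, iterating the inequality $n$ times yields $\mathbb{P}(|f(W_\infty)|>(1+\epsilon)^n s_0)\ge p^{\,n}\mathbb{P}(|f(W_\infty)|>s_0)$. Setting $t=(1+\epsilon)^n s_0$ and solving for $n$ gives
\[
\mathbb{P}(|f(W_\infty)|>t)\;\ge\;C\,t^{-\alpha},\qquad \alpha=\frac{-\log p}{\log(1+\epsilon)}>0,
\]
for all $t\ge s_0$, where $C>0$ depends on $s_0$ and $\mathbb{P}(|f(W_\infty)|>s_0)$. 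Since a polynomial tail lower bound forces $\mathbb{E}[e^{\lambda|f(W_\infty)|}]=\infty$ for every $\lambda>0$, the distribution of $f(W_\infty)$ is heavy‐tailed in the sense used in the paper.

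The main obstacle is simply packaging the conditioning step carefully so that the uniform‐in‐$w$ bound transfers across the coupling of $\Psi$ and $W_\infty$; once stationarity is invoked, the iteration and the polynomial bound are mechanical. The small side remark excluding the degenerate case $f(W_\infty)\equiv 0$ is essential to ensure the bound is not vacuous.
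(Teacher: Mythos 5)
Your proof is correct, and it takes a genuinely different route from the paper's. The paper argues along the trajectory of the chain: it conditions on $W_k$ and the expansion event to obtain $\|f(W_{k+1})\|_\beta^\beta \geq p_\epsilon(1+\epsilon)^\beta\|f(W_k)\|_\beta^\beta$, so that $\|f(W_k)\|_\beta$ diverges for $\beta$ exceeding a threshold $\alpha$, and then invokes the $f$-norm ergodic theorem to conclude that $\mathbb{E}|f(W_\infty)|^\beta = \infty$. You instead work directly with the stationary law via the distributional fixed point $\Psi(W_\infty)\eqd W_\infty$, turning the uniform-in-$w$ hypothesis into the self-similar tail inequality $\mathbb{P}(|f(W_\infty)|>(1+\epsilon)s)\geq p\,\mathbb{P}(|f(W_\infty)|>s)$ and iterating. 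Your route buys two things: it avoids the $f$-norm ergodic theorem entirely (needing only that the ergodic limit is stationary), and it yields the stronger, pointwise conclusion $\mathbb{P}(|f(W_\infty)|>t)=\Omega(t^{-\alpha})$ with $\alpha=-\log p/\log(1+\epsilon)$ --- a power-law lower bound in the sense the paper defines in its introduction --- whereas infinite $\beta$-th moments alone do not give a pointwise tail bound. The exponent you obtain matches the paper's threshold after optimizing over $\epsilon$. Your handling of the degenerate case $f(W_\infty)=0$ a.s.\ (which is also needed, implicitly, to make the paper's geometric growth argument non-vacuous) is correct: the hypothesis at points with $f(w)=0$ forces $\mathbb{P}(|f(\Psi(W_\infty))|>0)\geq p$, contradicting stationarity. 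The only point to make explicit when writing this up is the conditioning/Fubini step $\pi(|f|>(1+\epsilon)s)=\int P(w,\{|f|>(1+\epsilon)s\})\,\dd\pi(w)\geq p\,\pi(|f|>s)$, which is exactly where independence of the update $\Psi$ from the current state is used.
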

Under our model, Lemma~\ref{lem:Abstract} says that convergent constant step-size stochastic optimizers will exhibit heavy-tailed stationary behaviour if there is some positive probability of the optimizer moving further away from an optimum, irrespective of how near or far you are from it. Analyses concerning stochastic optimization algorithms almost always consider rates of contraction towards a nearby optimum, quantifying the \emph{exploitative} behaviour of a stochastic optimizer. To our knowledge, this is the first time that consequences of the fact that stochastic optimization algorithms could, at any stage, move away from any optimum, have been examined. In the convex setting, this appears detrimental, but this is critical in non-convex settings, where \emph{exploration} (rather than exploitation to a local optimum) is important. Indeed, we find that it is this behaviour that directly determines the tails of the stochastic optimizer's stationary distribution, and therefore, its exploratory behaviour. Altogether, our results suggest that heavy-tailed noise could be more common and more beneficial in stochastic optimization than previously acknowledged. 

\paragraph{Factors influencing the tail exponent.}  Recent analyses have associated heavier-tailed structures (i.e., larger tail exponents) with improved generalization performance \cite{martin2018implicit,martin2020predicting,simsekli2019tail}. From the bounds in Theorem~\ref{thm:Lipschitz}, and established theory for the special linear case (\ref{eq:LinearRecurrence}) (see Appendix \appref{A.2}{\ref{sec:KestenGoldie}}), the following factors play a role in \emph{decreasing the tail exponent} (generally speaking, increasing $k_\Psi,K_\Psi$ in expectation or dispersion implies decreased $\alpha$) resulting in \emph{heavier-tailed~noise}. 
\begin{itemize}
\item 
\textbf{Decreasing batch size:} 
For fixed hyperparameters, $\Psi$ becomes less deterministic as the batch size decreases, resulting in an decreased (i.e., heavier) tail exponent for ergodic stochastic optimizers. This is in line with \cite{yao2018hessian,martin2018implicit,martin2020heavy,martin2020predicting}, and it suggests a relationship to the \emph{generalization gap} phenomenon.\item
\textbf{Increasing step size:}
Also in line with \cite{martin2018implicit,martin2020heavy,martin2020predicting}, increased step sizes exacerbate fluctuations and thus result in increased tail exponents. 
However, step sizes also affect stability of the algorithm. 
The relationship between step and batch sizes has received attention \cite{balles2016coupling,smith2017don}; choosing these parameters to increase heavy-tailed fluctuations while keeping variance sensible could yield a valuable exploration~strategy. 
\item 
\textbf{More dispersed data:} 
Increasing dispersion in the data implies increased dispersion for the distribution of $k_\Psi,K_\Psi$, and hence heavier tails. For the same model trained to different datasets, a smaller tail exponent may be indicative of richer data, not necessarily of higher variance, but exhibiting a larger moment of some order. Data augmentation is a strategy to achieve this \cite{perez2017effectiveness}. 
\item 
\textbf{Increasing regularization:} 
The addition of an explicit $L^2$-regularizer to the objective function (known to help avoiding bad minima \cite{liu2019bad}) results in larger $k_\Psi$, and hence, heavier-tailed noise. 
\item 
\textbf{Increasing average condition number of Hessian: } 
The Hessian is known to play an important role in the performance of stochastic optimizers \cite{yao2018hessian,xing2018walk}. Evidently seen in (\ref{eq:kPsiSGD}) in the case of SGD, and potentially more generally, a larger condition number of the Hessian can allow for larger tail exponents while remaining ergodic. 
\end{itemize}
In each case (excluding step size, which is complicated since step size also affects stability), the literature supports the hypothesis that factors influencing heavier tails coincide with improved generalization performance; see, e.g.,~\cite{martin2018implicit} and references therein. Vanilla SGD and stochastic Newton both exhibit heavy-tailed behaviour when $\nabla^2 \ell(w,X)$ and the Jacobian of $H(w,X)^{-1}g(w,X)$ have unbounded spectral distributions, respectively (as might be the case when $X$ has full support on $\mathbb{R}^d$). However, adaptive optimizers such as momentum and Adam incorporate geometric decay that can prevent heavy-tailed fluctuations, potentially limiting exploration while excelling at exploitation to nearby optima. It has recently been suggested that these adaptive aspects should be turned off during an initial warmup phase \cite{liu2019variance}, implying that exacerbating heavy-tailed fluctuations and increased tail exponents could aid exploratory behaviour in the initial stages of training. 

\section{Numerical experiments}
\label{sec:Numerics}

To illustrate the advantages that multiplicative noise and heavy tails offer in non-convex optimization, in particular for exploration (of the entire loss surface) versus exploitation (to a local optimum), we first consider stochastic optimization algorithms in the non-stationary regime. 
To begin, in one dimension, we compare perturbed gradient descent (GD) \cite{jin2017escape} with additive light-tailed and heavy-tailed noise against a version with additional multiplicative noise.
That is,
\[
\mbox{(a,b)}\quad w_{k+1} = w_k - \gamma (f'(w_k) + (1+\sigma) Z_k),\quad \mbox{vs.}\quad \mbox{(c)}\quad w_{k+1} = w_k - \gamma ((1+\sigma Z_k^{(1)})f'(w_k) + Z_k^{(2)}),
\]
where $f:\,\mathbb{R}\to\mathbb{R}$ is the objective function, $\gamma,\sigma > 0$, $Z_k,Z_k^{(1)},Z_k^{(2)} \iidsim \mathcal{N}(0,1)$ in (a) and (c), and in (b), $Z_k$ are i.i.d. $t$-distributed (heavy-tailed) with 3 degrees of freedom, normalized to have unit variance. Algorithms (a)-(c) are applied to the objective $f(x) = \frac1{10}x^2 + 1 - \cos(x^2)$, which achieves a global (wide) minimum at zero. Iterations of (a)-(c) have expectation on par with classical GD. For fixed step size $\gamma = 10^{-2}$ and initial $w_0 = -4.75$, the distribution of $10^6$ successive iterates are presented in Figure~\ref{fig:BasinMulAdd} for small ($\sigma = 2$), moderate ($\sigma = 12$), and strong ($\sigma = 50$) noise. Both (b) and (c) readily enable jumps between basins. However, while (a), (b) smooth the effective loss landscape, making it easier to jump between basins, it also has the side effect of reducing resolution in the vicinity of minima. On the other hand, (c) maintains close proximity (peaks in the distribution) to critical points.

Figure~\ref{fig:BasinMulAdd} also illustrates exploration/exploitation benefits of multiplicative noise (and associated heavier tails) over additive noise (and associated lighter tails). While rapid exploration may be achieved using heavy-tailed additive noise \cite{simsekli2019tail}, since reducing the step size may not reduce the tail exponent, efficient exploitation of local minima can become challenging, even for small step sizes. On the other hand, multiplicative noise has the benefit of behaving similarly to Gaussian additive noise for small step sizes \cite{rubin2014mapping}.  We can see evidence of this behaviour in the leftmost column in Figure~\ref{fig:BasinMulAdd}, where the size of the multiplicative noise is small. As the step size is annealed to zero, multiplicative noise resembles the convolutional nature of additive noise~\cite{kleinberg2018alternative}.
\begin{figure}
\centering
\includegraphics[width=0.95\textwidth]{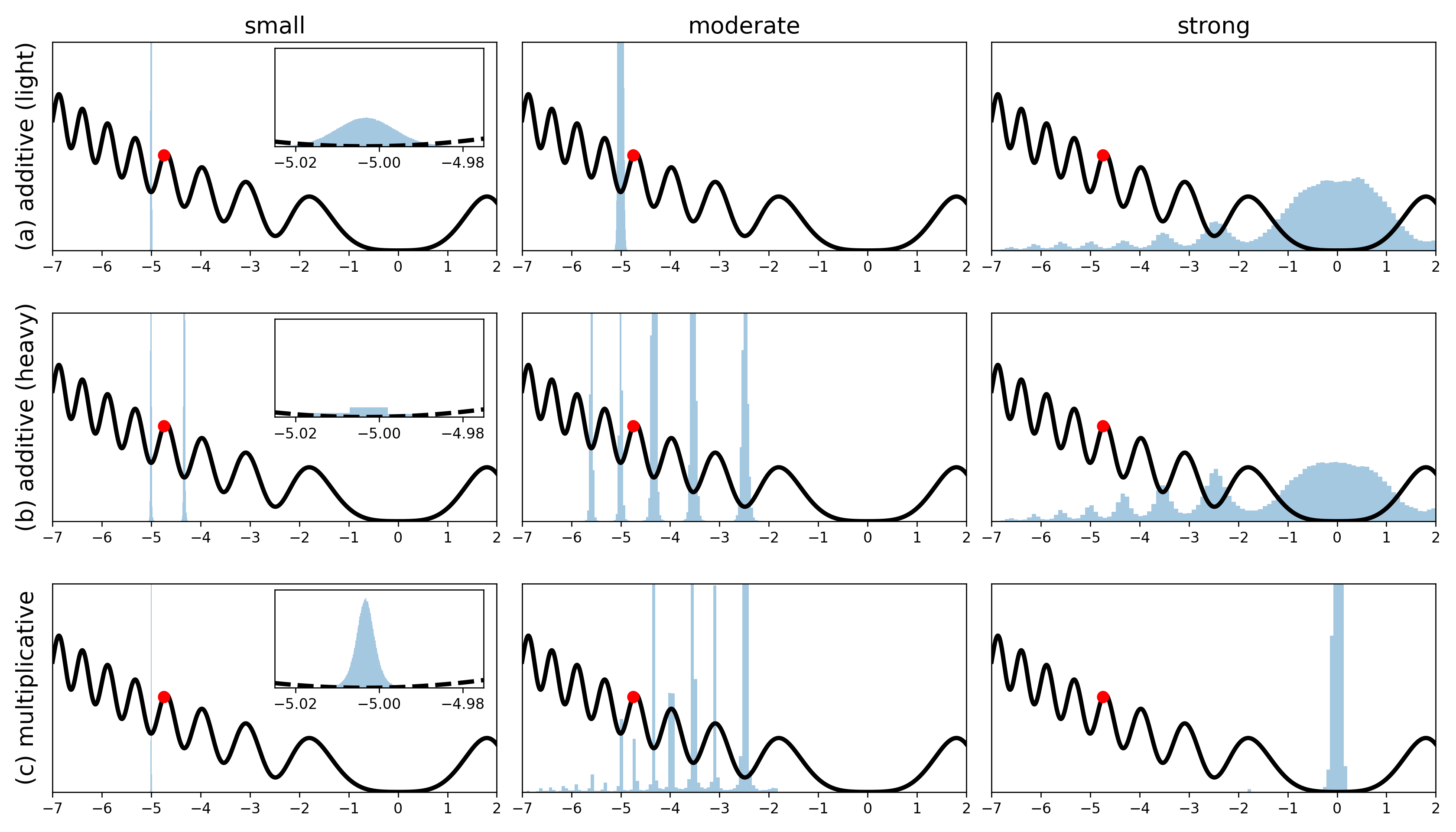}
\caption{\label{fig:BasinMulAdd}Histograms (blue) of $10^6$ iterations of GD with combinations of small, moderate, and strong versus light additive, heavy additive, and multiplicative noise, applied to a non-convex objective (black). (Red) Initial starting location for the optimization.}
\end{figure}
\begin{figure}
\centering
\includegraphics[width=\textwidth]{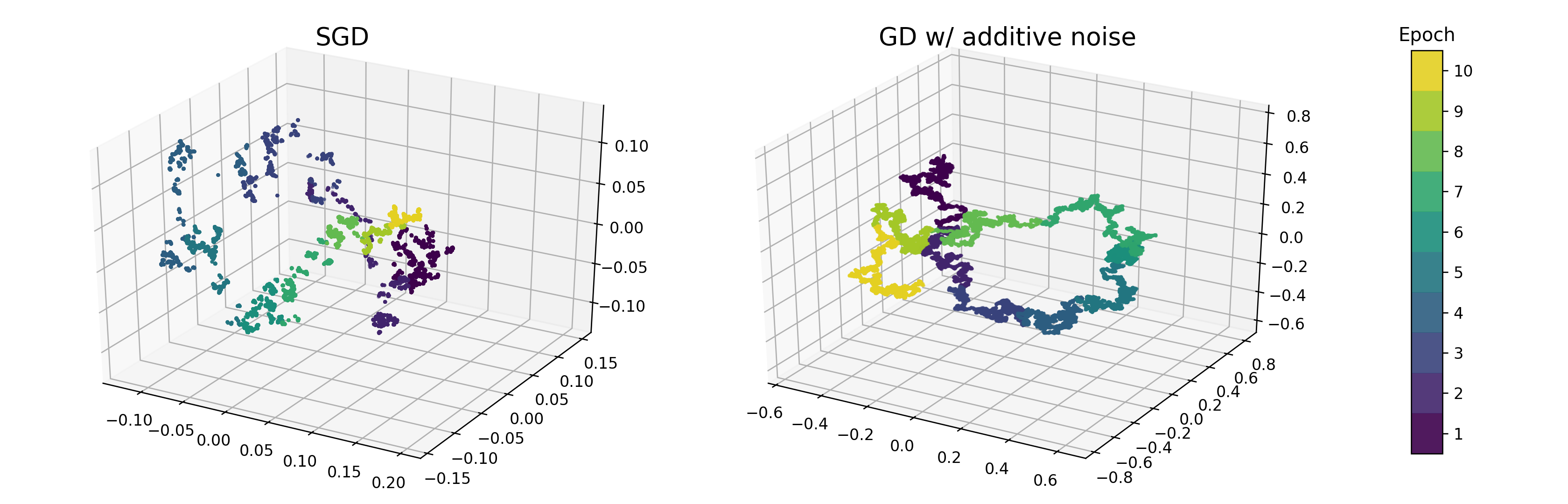}
\caption{\label{fig:BasinSGD} Heuristic visualization.  PCA scores for trajectories of SGD over 10 epochs versus perturbed GD with additive noise across the same number of iterations along principle components $2,3,4$.} \end{figure}

This same basin hopping behaviour can be observed heuristically in SGD for higher-dimensional models with the aid of principal component analysis (PCA), although jumps become much more frequent. To see this, we consider fitting a two-layer neural network with 16 hidden units for classification of the Musk dataset \cite{dietterich1997solving} (168 attributes; 6598 instances) with cross-entropy loss without regularization and step size $\gamma = 10^{-2}$. Two stochastic optimizers are compared: (a) SGD with a single sample per batch (without replacement), and (b) perturbed GD \cite{jin2017escape}, where covariance of iterations in (b) is chosen to approximate that of (a) on average.
PCA-projected trajectories are presented in Figure~\ref{fig:BasinSGD}.
SGD frequently exhibits jumps between basins, a feature not shared by perturbed GD with only additive noise. Further numerical analysis is conducted in Appendix~\appref{B}{\ref{sec:Numerics2}} to support claims in \S\ref{sec:General}.

\section{Discussion}
\label{sec:Discussion}

\paragraph{Lazy training and implicit renewal theory.}
Theory concerning random linear recurrence relations can be extended directly to SGD applied to objective functions whose gradients are ``almost'' linear, such as those seen in lazy training \cite{chizat2019lazy}, due to the implicit renewal theory of Goldie \cite{goldie1991implicit}. However, it seems unlikely that multilayer networks should exhibit the same tail exponent between each layer, where the conditions of Breiman's lemma (see Appendix~\appref{A}{\ref{sec:LinearRecurrence}}) break down. Significant discrepancies between tail exponents across layers were observed in other appearances of heavy-tailed noise \cite{martin2020predicting,simsekli2019tail}. 
From the point of view of tail exponents, it appears that most practical finite-width deep neural networks do not exhibit lazy training. This observation agrees with the poor relative generalization performance exhibited by linearized neural networks~\cite{chizat2019lazy,oymak2019generalization}.

\paragraph{Continuous-time models.}
Probabilistic analyses of stochastic optimization often consider continuous-time approximations, e.g., for constant step size, a stochastic differential equation of the form \cite{mandt2016variational, orvieto2019continuous,fontaine2020continuous}
\begin{equation}
\label{eq:LangevinModel}
\dd W_t = -\gamma \nabla f(W_t) \dd t + g(W_t) \dd X_t,
\end{equation}
where $f:\mathbb{R}^d \to \mathbb{R}$, $g:\mathbb{R}^{d} \to\mathbb{R}^{d\times d}$. The most common of these are Langevin models where $X$ is Brownian motion (white noise), although heavy-tailed noise has also been considered \cite{simsekli2019tail,simsekli2020fractional}. In the case where $g$ is diagonal, as a corollary of \cite{rubin2014mapping,ma2015complete}, we may determine conditions for heavy-tailed stationary behaviour. Letting $r(w) = \|g(w)^{-2} \nabla f(w)\|$, in Table~\ref{tab:Continuous} we present a continuous time analogue of Table~\ref{tab:Regimes} with examples of choices of $g$ that result in each regime when $f$ is quadratic.
Langevin algorithms commonly seen in the literature \cite{mandt2016variational,orvieto2019continuous} assume constant or bounded volatility ($g$), resulting in light-tailed stationary distributions for $L^2$-regularized loss functions. However, even a minute presence of multiplicative noise (unbounded $g$) can yield a heavy-tailed stationary distribution \cite{biro2005power}. Fortunately, more recent analyses are allowing for the presence of multiplicative noise in their approximations \cite{fontaine2020continuous}.
Based on our results, it seems clear that this trend is critical for adequate investigation of stochastic optimizers using the continuous-time approach. 
On the other hand, \cite{simsekli2019tail,simsekli2020fractional} invoked the generalized central limit theorem (CLT) to propose a continuous-time model with heavy-tailed (additive) L\`{e}vy noise, in response to the observation that stochastic gradient noise is frequently heavy-tailed.In their model, the heaviness of the noise does not change throughout the optimization. Furthermore, \cite{panigrahi2019non} observed that stochastic gradient noise is typically Gaussian for large batch sizes, which is incompatible with generalized CLT. We have illustrated how multiplicative noise also yields heavy-tailed fluctuations. Alternatively, heavy tails in the stochastic gradient noise for \emph{fixed} weights could be log-normal instead, which are known to arise in deep neural networks \cite{hanin2019products}. A notable consequence of the L\`{e}vy noise model is that exit times for basin hopping are essentially independent of basin height, and dependent instead on basin \emph{width}, which is commonly observed with SGD \cite{xing2018walk}. In the absence of heavy-tailed additive noise, we observed similar behaviour in our experiments for large multiplicative noise, although precise theoretical treatment remains the subject of future work.

\paragraph{Stochastic gradient MCMC.}
The same principles discussed here also apply to stochastic gradient MCMC (SG-MCMC) algorithms \cite{ma2015complete}. 
For example, the presence of multiplicative noise could suggest why SG-MCMC achieves superior mixing rates to classical Metropolis--Hastings MCMC in high dimensions, since the latter is known to struggle with multimodality, although there is some skepticism in this regard \cite{wenzel2020good}. Theorem~\ref{thm:Lipschitz} implies SG-MCMC yields excellent samples for satisfying the tail growth rate conditions required in importance sampling; see \cite{hodgkinson2020reproducing} for example.

\paragraph{Geometric properties of multiplicative noise.}
It has been suggested that increased noise in SGD acts as a form of convolution, smoothing the effective landscape \cite{kleinberg2018alternative}. This appears to be partially true. As seen in Figure~\ref{fig:BasinMulAdd}, smoothing behaviour is common for additive noise, and reduces resolution in the troughs. Multiplicative noise, which SGD also exhibits, has a different effect. As seen in \cite{rubin2014mapping}, in the continuous-time setting, multiplicative noise equates to conducting additive noise on a modified (via Lamperti transform) loss landscape. There is also a natural interpretation of multiplicative noise through choice of geometry in Riemannian Langevin diffusion \cite{girolami2011riemann,ma2015complete}. Under either interpretation, it appears multiplicative noise shrinks the width of peaks and widens troughs in the effective loss landscape, potentially negating some of the undesirable side effects of additive noise.

\paragraph{Heavy-tailed mechanistic universality.}
In a recent series of papers \cite{martin2018implicit, martin2019traditional, martin2020heavy, martin2020predicting}, an empirical (or phenomenological)
theory of \emph{heavy-tailed self-regularization} is developed, proposing that sufficiently complex and well-trained deep neural networks exhibit \emph{heavy-tailed mechanistic universality}: the spectral distribution of large weight matrices display a power law whose tail exponent is negatively correlated with generalization performance. If true, examination of this tail exponent provides an indication of model quality, and factors that are positively associated with improved test performance, such as decreased batch size. From random matrix theory, these heavy-tailed spectral distributions may arise either due to strong correlations arising in the weight matrices, or heavy-tailed distributions arising in each of the weights over time. The reality may be some combination of both; here, we have illustrated that the second avenue is at least possible, and relationships between factors of the optimization and the tail exponent agree with the present findings. Proving that heavy-tailed behaviour can arise in spectral distributions of the weights (as opposed to their probability distributions, which we have treated here) is a challenging problem, and remains the subject of future work. Casting the evolution of spectral distributions into the framework of iterated random functions could prove fruitful in this respect. 

\section{Conclusion}
A theoretical analysis on the relationship between multiplicative noise and heavy-tailed fluctuations in stochastic optimization algorithms has been conducted. We propose that viewing stochastic optimizers through the lens of Markov process theory and examining stationary behaviour is key to understanding exploratory behaviour on non-convex landscapes in the initial phase of training. Our results suggest that heavy-tailed fluctuations may be more common than previous analyses have suggested, and they further the hypothesis that such fluctuations are correlated to improved generalization performance. From this viewpoint, we maintain that multiplicative noise should not be overlooked in future analyses on the subject. 

\paragraph{Acknowledgments.} We would like to acknowledge DARPA, NSF, and ONR for providing partial support of this work.

\bibliographystyle{abbrv}

\appendix

\section{Random linear recurrence relations}
\label{sec:LinearRecurrence}

Here, we shall discuss existing theory concerning the random linear recurrence relation $W_{k+1} = A_k W_k + B_k$ that arises in (\ref{eq:LinearRecurrence}).
Because $(A_k,B_k)$ for each $k=0,1,2,\dots$ is independent and identically distributed, we let $(A,B) = (A_0, B_0)$, noting that $(A,B) \eqd (A_k,B_k)$ for all $k$. 
First, we state conditions under which (\ref{eq:LinearRecurrence}) yields an ergodic Markov chain. The following lemma combines \cite[Theorem 4.1.4 and Proposition 4.2.1]{buraczewski2016stochastic} and implies Lemma~\ref{lem:LinearRecur}.
\begin{lemma}
Suppose that $A$ and $B$ are non-deterministic and both $\log^+\|A\|$ and $\log^+\|B\|$ are integrable. 
Then if $\mathbb{E}\log\|A\| < 0$, the Markov chain (\ref{eq:LinearRecurrence}) has a unique stationary distribution. If also either $A$ or $B$ is non-atomic, then the Markov chain (\ref{eq:LinearRecurrence}) is ergodic.
\end{lemma}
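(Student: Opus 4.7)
The plan is to run the classical backward-iteration argument for random linear recurrences: represent iterates as a geometrically convergent random series, identify its limit with the unique stationary law, and then upgrade to ergodicity using the non-atomic hypothesis to establish a minorization. Concretely, fix $W_0 = w$ and expand $W_{k+1} = A_k W_k + B_k$ to get $W_{k+1} = A_k\cdots A_0 w + \sum_{j=0}^{k}(A_k\cdots A_{j+1}) B_j$ (with empty products equal to the identity). Since $\{(A_j,B_j)\}$ is i.i.d., reversing time preserves the joint law, so $W_{k+1} \eqd \tilde W_{k+1} := A_0\cdots A_k w + \sum_{j=0}^{k}(A_0\cdots A_{j-1}) B_j$, and it suffices to prove that $\tilde W_k$ converges almost surely.

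The first step is to show geometric contraction of the left-hand matrix product. By submultiplicativity $\log\|A_0\cdots A_{k-1}\| \le \sum_{j<k}\log\|A_j\|$, so Kingman's subadditive ergodic theorem (applied under the integrability of $\log^+\|A\|$) yields $\gamma := \lim_k k^{-1}\log\|A_0\cdots A_{k-1}\| = \inf_k k^{-1}\mathbb{E}\log\|A_0\cdots A_{k-1}\| \le \mathbb{E}\log\|A\| < 0$ a.s. Hence $\|A_0\cdots A_{k-1}\|$ decays exponentially, which kills the transient term $A_0\cdots A_k w \to 0$. For the series, the Borel--Cantelli lemma together with $\mathbb{E}\log^+\|B\|<\infty$ gives $k^{-1}\log^+\|B_k\| \to 0$ a.s., so the summand $\|A_0\cdots A_{j-1}\|\,\|B_j\|$ is eventually bounded by $e^{(\gamma+\varepsilon)j}$ for any $\varepsilon \in (0,-\gamma)$. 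Absolute convergence of the series follows, giving $\tilde W_k \to W_\infty$ almost surely, where $W_\infty = \sum_{j=0}^{\infty} (A_0\cdots A_{j-1}) B_j$ is a.s. finite. Passing to the limit in $\tilde W_{k+1} = A_0 \tilde W_k^{\prime} + B_0$ (with $\tilde W_k^{\prime}$ built from $(A_1,B_1),\dots$ and independent of $(A_0,B_0)$) shows $W_\infty \eqd A W_\infty^{\prime} + B$ with $W_\infty^{\prime}$ an independent copy, so the law of $W_\infty$ is stationary for the recurrence. Uniqueness is immediate: any stationary $W$ admits the decomposition $W \eqd A_0\cdots A_{k-1} W^{(0)} + \sum_{j=0}^{k-1}(A_0\cdots A_{j-1})B_j$ with $W^{(0)}\eqd W$ independent, and as $k\to\infty$ the first term vanishes a.s.\ by the contraction while the second converges to $W_\infty$.

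For ergodicity, one needs more than existence and uniqueness of an invariant law: convergence from arbitrary initial distributions in total variation. The idea is to apply Meyn--Tweedie's theory by exhibiting a Foster--Lyapunov drift condition together with a minorization on a ``small'' set. The drift function $V(w) = 1 + \log(1+\|w\|)$ satisfies $\mathbb{E}[V(A w + B) \mid W_k = w] \le V(w) + \mathbb{E}\log\|A\| + o(1)$ as $\|w\|\to\infty$, which since $\mathbb{E}\log\|A\|<0$ gives a geometric drift toward bounded sets. The non-atomic hypothesis on $A$ or $B$ is used exactly here: it ensures that the one-step (or, after iteration, $n$-step) transition kernel $P(w,\cdot)$ admits a component absolutely continuous with respect to some non-trivial reference measure on a neighborhood in $S$, yielding the minorization $P^n(w,\cdot) \ge \epsilon \nu(\cdot)$ on compact sets needed for the small-set condition. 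Combining drift and minorization via Meyn--Tweedie gives geometric (hence at least total-variation) ergodicity to the unique stationary law identified above.

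The main obstacle is the ergodicity step, specifically constructing the minorization from the bare ``non-atomic'' assumption on one of $A,B$ rather than, say, an absolute-continuity or density assumption. The trick is that non-atomicity of $B$ propagates through the affine map $w \mapsto Aw+B$ to give a non-atomic component of $P(w,\cdot)$ uniformly on compacts, while non-atomicity of $A$ must be funneled through the image of the affine action (which may be of lower rank), so one may have to iterate several steps before the product $A_0\cdots A_{n-1}$ spreads mass sufficiently. The rest of the argument — backward series, Lyapunov exponent, uniqueness — is standard and parallels the treatment in Buraczewski--Damek--Mikosch, from which the stated lemma is essentially extracted.
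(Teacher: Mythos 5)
The paper offers no proof of this lemma at all---it is imported verbatim as a combination of Theorem 4.1.4 and Proposition 4.2.1 of Buraczewski--Damek--Mikosch---so you are reconstructing the cited argument rather than competing with one in the text. Your first half is correct and complete: the backward-iteration representation, the negative top Lyapunov exponent via Kingman (or just the SLLN applied to the submultiplicative upper bound), Borel--Cantelli giving $k^{-1}\log^+\|B_k\|\to 0$, absolute convergence of the perpetuity series, stationarity of its law, and uniqueness via $\|A_0\cdots A_{k-1}\|\to 0$ is exactly the standard proof of existence and uniqueness of the stationary distribution.

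The ergodicity half has a genuine gap. You assert that non-atomicity of $A$ or $B$ ``ensures that the transition kernel $P(w,\cdot)$ admits a component absolutely continuous with respect to some non-trivial reference measure,'' and you hang the Meyn--Tweedie minorization $P^n(w,\cdot)\geq \epsilon\,\nu(\cdot)$ on that. This implication is false: a non-atomic law can be singular continuous (e.g., $B$ uniform on a Cantor set), in which case there is no reference measure against which $P(w,\cdot)$ has a density, and even when a density exists, the kernels $P(w,\cdot)$ for distinct $w$ in a compact set are shifted images whose supports need not overlap, so no uniform minorizing measure $\nu$ is produced. Hence neither $\psi$-irreducibility nor the small-set condition follows from the stated hypotheses by the route you sketch, and the (correct) drift computation cannot compensate. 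The proofs that actually establish Harris/total-variation ergodicity under bare non-atomicity---Alsmeyer's Harris-recurrence theorem for iterated random Lipschitz maps, which this paper itself invokes in the proof of Theorem~\ref{thm:Lipschitz}, and the cited Proposition 4.2.1---proceed differently: the almost-sure geometric merging of trajectories started from different points (which you already have from $\|A_{k-1}\cdots A_0\|\to 0$) is upgraded to an exact coupling by a local argument in which non-atomicity is used to rule out atomic and periodic obstructions, not to manufacture a density. To close the gap you would need to carry out that coupling argument, or else strengthen the hypothesis to an absolutely continuous component (as in the ``spread-out'' condition) before applying Meyn--Tweedie.
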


The intuition behind the presence of heavy-tailed behaviour is easily derived from the Breiman lemma concerning regularly varying random vectors. Recall that a random vector $X$ is regularly varying if there exists a measure $\mu_X$ with zero mass at infinity such that
\begin{equation}
\label{eq:RegularlyVarying}
\lim_{x \to \infty} \frac{\mathbb{P}(x^{-1} X \in B)}{\mathbb{P}(\|X\| > x)} = \mu_X(B),\quad\mbox{for any set $B$ satisfying $\mu(\partial B) = 0$.}
\end{equation}
By Karamata's characterization theorem \cite[Theorem 1.4.1]{bingham1989regular}, for any regularly varying random vector $X$, there exists an $\alpha > 0$ such that $x^{\alpha} \mathbb{P}(x^{-1} X \in \cdot)$ converges as $x \to \infty$ to a non-null measure. In particular, $\|X\|$ and $|\langle u, X\rangle|$ for every $u \in \mathbb{R}^d$ obey a power law with tail exponent $\alpha$ subject to slowly varying functions\footnote{Recall that a function $f$ is slowly varying if $f(t x) / f(x) \to 1$ as $\|x\| \to \infty$, for any $t > 0$.} $L$, $L_u$:
\begin{equation}
\label{eq:PowerLaw}
\mathbb{P}(\|X\| > x) \sim L(x) x^{-\alpha},\qquad \mathbb{P}(|\langle u, X\rangle| > x) \sim L_u(x) x^{-\alpha}.
\end{equation}
A random vector $X$ satisfying (\ref{eq:RegularlyVarying}) and (\ref{eq:PowerLaw}) is said to be \emph{regularly varying with index $\alpha$} (abbreviated $\alpha$-RV). The following is \cite[Lemma C.3.1]{buraczewski2016stochastic}.
\begin{lemma}[\textsc{Breiman's lemma}]
Let $X$ be an $\alpha$-RV random vector, $A$ a random matrix such that\footnote{For example, $A$ could be $\beta$-RV with $\beta > \alpha$.} $\mathbb{E}\|A\|^{\alpha + \epsilon} < +\infty$, and $B$ a random vector such that $\mathbb{P}(\|B\| > x) = o(\mathbb{P}(\|X\| > x))$ as $x \to \infty$. Then $AX + B$ is $\alpha$-RV.
\end{lemma}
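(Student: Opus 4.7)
The plan is to first peel off the additive term $B$, and then establish the main assertion for $AX$ alone by conditioning on $A$ and applying dominated convergence. For the first step, I would show $\mathbb{P}(\|AX+B\|>x) \sim \mathbb{P}(\|AX\|>x)$ and, more generally, that for any bounded continuity set $C$ bounded away from the origin, the ratios $\mathbb{P}(x^{-1}(AX+B) \in C)/\mathbb{P}(\|X\|>x)$ and $\mathbb{P}(x^{-1}AX \in C)/\mathbb{P}(\|X\|>x)$ have the same limit. This is a standard tail-asymptotic argument: split on $\{\|B\| \leq \eta x\}$ versus its complement, use that $\mathbb{P}(\|B\|>\eta x) = o(\mathbb{P}(\|X\|>x))$ by hypothesis to eliminate the latter, and then let $\eta \downarrow 0$ using continuity of $\mu_{AX}$ at the boundary of $C$.

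For the main step, fix a Borel set $C$ bounded away from $0$ and a continuity set of the candidate limit measure. Conditioning on $A$ and using the independence of $A$ and $X$ gives
\begin{equation*}
\frac{\mathbb{P}(x^{-1}AX \in C)}{\mathbb{P}(\|X\|>x)} = \mathbb{E}\!\left[\frac{\mathbb{P}(x^{-1}X \in A^{-1}C \mid A)}{\mathbb{P}(\|X\|>x)}\right].
\end{equation*}
By the $\alpha$-RV hypothesis on $X$, for a.e.\ realization of $A$ the integrand converges pointwise to $\mu_X(A^{-1}C)$ as $x \to \infty$. Pushing this limit inside the expectation gives $\mathbb{E}[\mu_X(A^{-1}C)] =: \mu_{AX}(C)$, which is the pushforward of $\mu_X$ under $A$ and is visibly a nontrivial Radon measure on $\mathbb{R}^d \setminus \{0\}$ with the $\alpha$-homogeneity inherited from $\mu_X$. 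The resulting tail asymptotic (\ref{eq:PowerLaw}) for $\|AX\|$ (and hence for $\|AX+B\|$) then follows by specializing $C$ to complements of balls.

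The crux is therefore the justification of the interchange of limit and expectation, and this is where the moment hypothesis $\mathbb{E}\|A\|^{\alpha+\epsilon} < \infty$ enters. The tool I would use is Potter's bound: since the function $x \mapsto \mathbb{P}(\|X\|>x)$ is regularly varying of index $-\alpha$, for any $\delta \in (0, \epsilon)$ there exist constants $K_\delta, x_0$ such that $\mathbb{P}(\|X\|>tx)/\mathbb{P}(\|X\|>x) \leq K_\delta \max(t^{-\alpha+\delta}, t^{-\alpha-\delta})$ uniformly in $t>0$ and $x \geq x_0$. Since $x^{-1}X \in A^{-1}C$ implies $\|X\| \geq x \,\mathrm{dist}(0,C)/\|A\|$ (taking $C$ bounded away from $0$), this yields a bound of the form $K'_\delta(\|A\|^{\alpha+\delta} + \|A\|^{\alpha-\delta})$ on the integrand, which is $\mathbb{P}$-integrable by choosing $\delta < \epsilon$. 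Dominated convergence then applies.

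The main obstacle I anticipate is technical rather than conceptual: handling sets $C$ through which $A^{-1}C$ may degenerate (for non-square or singular $A$) and verifying that the limiting measure $\mu_{AX}$ assigns zero mass to the boundary of $C$, so that the sets considered really are continuity sets. I would sidestep this by first restricting attention to sets of the form $\{y : \|y\| > r,\ y/\|y\| \in E\}$ with $E$ a continuity set of the spherical component of $\mu_X$, and noting that by the $\alpha$-homogeneity of $\mu_{AX}$ its boundary-null sets form a sufficiently rich class to identify the limit. Once Breiman's lemma is established for $AX$, the first paragraph transfers the conclusion to $AX+B$, completing the~proof.
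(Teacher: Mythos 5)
The paper does not actually prove this lemma: it is quoted directly from Lemma~C.3.1 of \cite{buraczewski2016stochastic} and used as a black box, so there is no internal proof to compare against. Your argument is the standard proof of the multivariate Breiman lemma --- condition on $A$, dominate via Potter's bounds, apply dominated convergence, then absorb $B$ by a tail-splitting/enlargement argument --- and it is correct in outline; it is essentially the proof given in the cited reference. Three points deserve tightening. First, the conditioning step silently uses that $A$ (and $B$) are independent of $X$; this is implicit in the statement and in its application to (\ref{eq:LinearRecurrence}), but should be stated, since the lemma is false without it. Second, Potter's bound only applies when both arguments exceed the threshold $x_0$, and with $t = r/\|A\|$ (where $r = \mathrm{dist}(0,C)$) the quantity $tx$ drops below $x_0$ on the event $\{\|A\| > rx/x_0\}$; there you must bound the conditional probability by $1$ and use that $x^{\alpha+\delta}\,\mathbb{P}(\|X\|>x) \to \infty$ to still obtain a dominating function of the form $K(1+\|A\|^{\alpha+\delta})$ --- this is precisely where the slack $\delta < \epsilon$ is consumed a second time, and your write-up skips it. Third, the limit measure $\mathbb{E}[\mu_X(A^{-1}\,\cdot\,)]$ is not ``visibly nontrivial'': if $A$ almost surely annihilates the support of the spectral measure of $\mu_X$ (e.g.\ $A=0$ a.s.), the limit is null and $AX+B$ need not be $\alpha$-RV in the normalized sense of (\ref{eq:RegularlyVarying})--(\ref{eq:PowerLaw}); the precise statement requires a non-degeneracy condition such as $\mathbb{E}\|A\theta\|^{\alpha} > 0$ for $\theta$ in the support of $\mu_X$. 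None of these caveats affects how the lemma is used in the paper, where $A$ is a perturbation of the identity.
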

In other words, the index of regular variation is preserved under random linear operations, and so regularly varying random vectors are distributional fixed points of random linear recurrence relations. Conditions for the converse statement are well-known in the literature \cite{buraczewski2016stochastic}. Here, we provide brief expositions of the three primary regimes dictating the tails of any stationary distribution of (\ref{eq:LinearRecurrence}). It is worth noting that other corner cases do exist, including super-heavy tails (see \cite[Section 5.5]{buraczewski2016stochastic} for example), but are outside the scope of this paper.

\subsection{The Goldie--Gr\"{u}bel (light-tailed) regime}
\label{sec:GoldieGrubel}

To start, consider the case where neither $A$ nor $B$ are heavy-tailed and the stochastic optimization dynamics are such that $W_{\infty}$ is light-tailed. 
In particular, assume that all moments of $B$ are finite. By applying the triangle inequality to (\ref{eq:LinearRecurrence}), one immediately finds
\[
\|W_{k+1}\| \leq \|A_k\|\|W_k\| + \|B_k\|,\quad \mbox{and}\quad \|W_{k+1}\|_{\alpha} \leq \|A\|_{\alpha} \|W_k\|_{\alpha} + \|B\|_{\alpha}.
\]
Therefore, if $\|A\| \leq 1$ almost surely and $\mathbb{P}(\|A\| < 1) > 0$, then for any $\alpha \geq 1$, $\|A\|_{\alpha} < 1$ and so $\|W_k\|_{\alpha}$ is bounded in $k$. The Markov chain (\ref{eq:LinearRecurrence}) is clearly ergodic, and the existence of all moments suggests that the limiting distribution $W_{\infty}$ of $W_k$ cannot satisfy a power law. With significant effort, one can show that more is true: Goldie and Gr\"{u}bel proved in \cite[Theorem 2.1]{goldie1996perpetuities} that if $B$ is also light-tailed, then $W_{\infty}$ is \textbf{light-tailed}. To our knowledge, this is the only setting where one can prove that the Markov chain (\ref{eq:LinearRecurrence}) possesses a light-tailed limiting distribution, and it requires contraction (and therefore, consistent linear convergence) at every step with probability one. In the stochastic optimization setting, the Goldie--Gr\"{u}bel regime coincides with optimizers that have purely exploitative (no explorative) behaviour. Should the chain fail to contract even once, we move outside of this regime and enter the territory of heavy-tailed stationary distributions. 

\subsection{The Kesten--Goldie (heavy-tailed due to intrinsic factors) regime}
\label{sec:KestenGoldie}

Next, consider the case where neither $A$ nor $B$ are heavy-tailed, but the stochastic optimization dynamics are such that $W_{\infty}$ is heavy-tailed. 
To consider a \emph{lower bound}, recall that the smallest singular value of $A$, $\sigma_{\min}(A)$, satisfies $\sigma_{\min}(A) = \inf_{\|w\| = 1} \|A w\|$. Therefore, once again from (\ref{eq:LinearRecurrence}),
\[
\|W_{k+1}\| \geq \sigma_{\min}(A_k)\|W_k\| - \|B_k\|,\quad \mbox{and}\quad \|W_{k+1}\|_{\alpha} \geq \|\sigma_{\min}(A)\|_{\alpha} \|W_k\|_{\alpha} - \|B\|_{\alpha}.
\]
Assuming that the Markov chain (\ref{eq:LinearRecurrence}) is ergodic with limiting distribution $W_\infty$, by the $f$-norm ergodic theorem \cite[Theorem 14.0.1]{meyn2012markov}, $\|W_{\infty}\|_\alpha$ is finite if and only if $\|W_k\|_\alpha$ is bounded in $k$ for any initial $W_0$. However, if $\mathbb{P}(\sigma_{\min}(A) > 1) > 0$, then there exists some $\alpha > 1$ such that $\|\sigma_{\min}(A)\|_{\alpha} > 1$. If $\|B\|_{\alpha}$ is finite, then $\|W_k\|_{\alpha}$ is unbounded when $\|W_0\|_{\alpha}$ is sufficiently large, implying that $W_{\infty}$ is \textbf{heavy-tailed}. 

This suggests that the tails of the distribution of $\|W_{\infty}\|$ are at least as heavy as a power law. To show they are dictated \emph{precisely} by a power law, that is, $W_{\infty}$ is $\alpha$-RV for some $\alpha > 0$, is more challenging. The following theorem is a direct corollary of the Kesten's celebrated theorem \cite[Theorem 6]{kesten1973random}, and Goldie's generalizations thereof in \cite{goldie1991implicit}.

~
\begin{theorem}[\textsc{Kesten--Goldie theorem}]
\label{thm:Kesten}
Assume the following:
\begin{itemize}[leftmargin=*]
    \item The Markov chain (\ref{eq:LinearRecurrence}) is ergodic with $W_{\infty} = \lim_{k\to\infty} W_k$ (in distribution).
    \item The distribution of $X$ has absolutely continuous component with respect to Lebesgue density that has support containing the zero matrix, and $Y$ is non-zero with positive probability.
    \item There exists $s > 0$ such that $\mathbb{E}\sigma_{\min}(A)^s = 1$.
    \item $A$ is almost surely invertible and $\mathbb{E}[\|A\|^s \log^+\|A\|] + \mathbb{E}[\|A\|^s \log^+\|A^{-1}\|] < \infty$.
    \item $\mathbb{E}\|B\|^s < \infty$.
\end{itemize}
Then $W_{\infty}$ is $\alpha$-RV for some $0 < \alpha \leq s$.
\end{theorem}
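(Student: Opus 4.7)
The natural route is via Goldie's implicit renewal theorem, since $W_{\infty}$ satisfies the distributional fixed point $W_{\infty} \eqd A W_{\infty} + B$ with $W_{\infty}$ independent of $(A,B)$, which is exactly the form to which Kesten--Goldie theory applies. The overall plan is: first locate a candidate exponent $\alpha \in (0,s]$; second verify the non-arithmeticity/support hypotheses on $A$ needed for the renewal argument; third discharge the Breiman-type moment bounds on $B$; and finally invoke Goldie's theorem to pass from the implicit fixed point to a genuine power-law tail.

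For the existence of $\alpha$, I would study the moment function $h(t) = \mathbb{E}\sigma_{\min}(A)^t$ (and the companion $H(t) = \mathbb{E}\|A\|^t$). Both are log-convex in $t$, equal $1$ at $t=0$, and have derivative $\mathbb{E}\log\sigma_{\min}(A) \le \mathbb{E}\log\|A\| < 0$ at zero thanks to the ergodicity hypothesis carried over from Lemma~\ref{lem:LinearRecur}. Since $h(s) = 1$ by assumption and $H(s) \ge h(s) = 1$, log-convexity plus the intermediate value theorem produce a unique $\alpha \in (0, s]$ with $H(\alpha) = 1$. The inequality $\alpha \le s$ is then forced by $H \ge h$. (In the matrix setting one should really work with the top-Lyapunov-type quantity $\gamma(t) = \lim_{k\to\infty} k^{-1} \log \mathbb{E}\|A_k\cdots A_1\|^t$, but the same convexity/IVT argument locates $\alpha$.)

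For the renewal-theoretic step, the assumption that $X$ has an absolutely continuous component whose support contains the origin is precisely what rules out an arithmetic distribution for $\log\|A_1\cdots A_k\|$, which is the crucial non-lattice hypothesis in Goldie's theorem; it also ensures the $A$-action on the unit sphere is sufficiently spread out to satisfy Kesten's irreducibility/proximality-type conditions. The integrability assumption $\mathbb{E}[\|A\|^s\log^+\|A\|] + \mathbb{E}[\|A\|^s \log^+\|A^{-1}\|]<\infty$ supplies the finite-logarithmic-moment conditions required for the key renewal equation on $\log$-scale to be well-posed. The $Y$ non-triviality guarantees that $B = \gamma n^{-1}\sum_i Y_{ik}\otimes X_{ik}$ is not degenerate at zero, so the limit distribution is non-trivial, and $\mathbb{E}\|B\|^s<\infty$ (together with $\alpha\le s$) lets Breiman's lemma handle the additive perturbation without altering the tail index.

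The main obstacle is the step from a one-dimensional Kesten-style argument to the genuinely matrix-valued case. In the scalar setting, the implicit renewal step reduces to a clean application of Blackwell's theorem to $\log|W_\infty|$; in the matrix case one must instead control the ladder process on the sphere, which is why the Kesten--Goldie theorem requires the density-with-support-at-zero condition and the invertibility of $A$. Once those conditions are checked, Goldie's framework gives the regular variation of $W_{\infty}$ with exponent $\alpha$, and combining with $\alpha \le s$ from the moment comparison completes the proof. I would not attempt to reprove the renewal theorem itself and would quote \cite[Thm.~6]{kesten1973random} and \cite{goldie1991implicit} directly, keeping the novelty of the argument confined to verifying that the stochastic-optimization-induced $(A,B)$ actually meets their hypotheses.
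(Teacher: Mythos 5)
Your proposal is correct and follows essentially the same route as the paper, which proves nothing beyond observing that the statement is a direct corollary of Kesten's Theorem~6 and Goldie's implicit renewal theory once the hypotheses (non-arithmeticity from the density condition on $X$, the moment condition locating $\alpha \in (0,s]$ via convexity of $t \mapsto \mathbb{E}\|A\|^t$, and the Breiman-type control of $B$) are checked. Your additional detail on locating $\alpha$ through the log-convex moment function and the Lyapunov-type quantity is a sound elaboration of exactly that citation-based argument.
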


\subsection{The Grincevi\v{c}ius--Grey (heavy-tailed due to extrinsic factors) regime}
\label{sec:GrinceviciusGrey}

Finally, consider the case where $B$ is heavy-tailed, in particular, that $B$ is $\beta$-RV. If $\|A\|_{\beta} < 1$, then the arguments seen in the Kesten--Goldie regime can no longer hold, since $\|B\|_{\alpha}$ would be infinite for any $\alpha$ such that $\|\sigma_{\min}(A)\|_{\alpha} = 1$. Instead, by \cite[Theorem 4.4.24]{buraczewski2016stochastic}, a limiting distribution of (\ref{eq:LinearRecurrence}) is necessarily $\beta$-RV, provided that $\|A\|_{\beta + \delta}$ is finite for some $\delta > 0$. This was proved in the univariate case by Grincevi\v{c}ius \cite[Theorem 1]{grincevicius1975one}, later updated by Grey \cite{grey1994regular} to include the converse result: if the limiting distribution of (\ref{eq:LinearRecurrence}) is $\beta$-RV and $\|A\|_{\beta} < 1$, then $B$ is $\beta$-RV. 

Contrary to the Kesten--Goldie regime, here neither $A$ nor the recursion itself play much of a role. The optimization procedure itself is fairly irrelevant: from Breiman's lemma, the distribution of $W_k$ is heavy-tailed after only the first iteration, and the tail exponent remains constant, i.e., $W_{\infty}$ is heavy-tailed. Therefore, in the Grincevi\v{c}ius--Grey regime, the dynamics of the stochastic optimization are dominated by \textbf{extrinsic~factors}. 

\section{Numerical examinations of heavy tails}
\label{sec:Numerics2}

Power laws are notoriously treacherous to investigate empirically \cite{clauset2009power}, especially in higher dimensions \cite{panigrahi2019non}, and this plays a significant role in our focus on establishing mathematical theory. Nevertheless, 
due to the mystique surrounding heavy tails and our discussion in \S\ref{sec:General} concerning the impact of various factors on the tail exponent being predominantly informal, we also recognize the value of empirical confirmation. Here, we shall conduct a few numerical examinations to complement our main discussion. For further empirical analyses concerning non-Gaussian fluctuations in stochastic optimization, we refer to \cite{simsekli2019tail,panigrahi2019non}.

\begin{figure}
    \centering
    \includegraphics[width=0.48\textwidth]{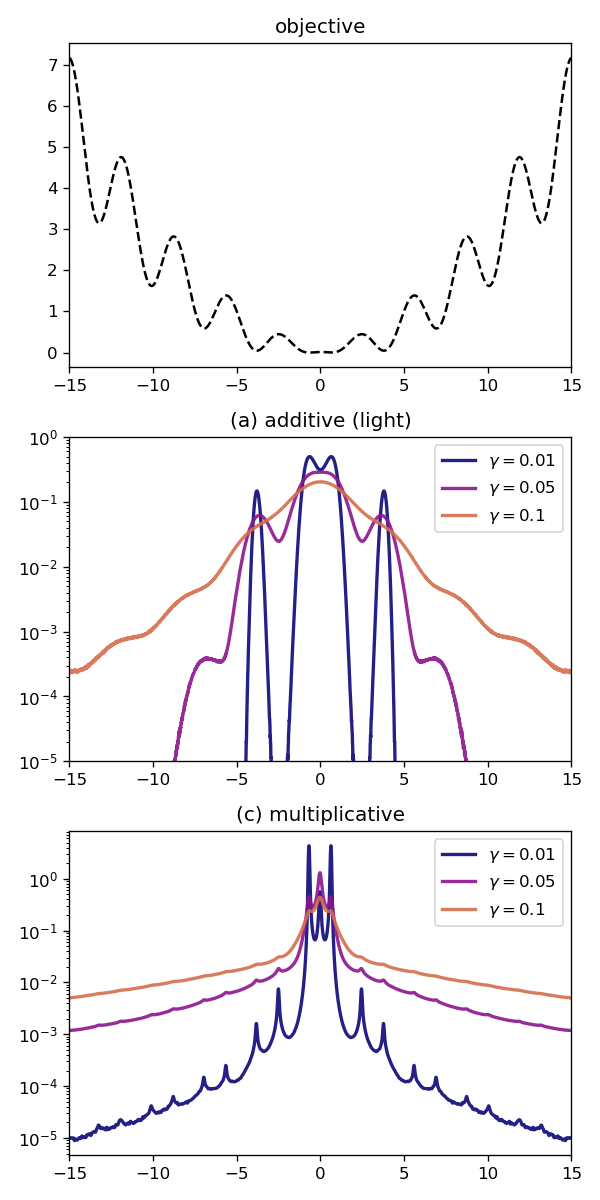}
    \caption{Estimated stationary distributions for optimizers (a) and (c) applied to a non-convex objective $f$ with derivative $f'(x) = x(1-4\cos(2 x))$, over varying step sizes $\gamma$.}
    \label{fig:HeavyTail1D}
\end{figure}

As a quick illustration, in Figure \ref{fig:HeavyTail1D}, we contrast tail behaviour in the stationary distributions of the Markov chains induced by optimizers (a) (additive) and (c) (multiplicative) introduced in \S\ref{sec:Numerics}. Three different step sizes are used, with constant $\sigma = 10$. To exacerbate multimodality in the stationary distribution, we consider an objective $f$ with derivative $f'(x) = x(1-4\cos(2 x))$, visualized in the upper part of Figure \ref{fig:HeavyTail1D}. Accurately visualizing the stationary distribution, especially its tails, is challenging: to do so, we apply a low bandwidth kernel density estimate to $10^9$ steps. As expected, multiplicative noise exhibits slowly decaying heavy tails in contrast to the rapidly decaying Gaussian tails seen with additive light noise. Furthermore, the heaviness of the tails increases with the step size.

To estimate the power law from empirically obtained data, in the sequel, we shall make use of the \texttt{powerlaw} Python package \cite{alstott2014powerlaw}, which applies a combination of maximum likelihood estimation and Kolmogorov-Smirnov techniques (see \cite{clauset2009power}) to fit a Pareto distribution to data. Recall that a Pareto distribution has density $p(t) = \beta t_{\min}^\beta t^{-\beta}$ for $t \geq t_{\min}$, where $t_{\min}$ is the scale parameter (that is, where the power law in the tail begins), and $\beta$ is the tail exponent in the density. Note that this $\beta$ is related to our definition of the tail exponent $\alpha$ by $\alpha = \beta - 1$. Unbiased estimates of this tail exponent $\alpha$ obtained from the \texttt{powerlaw} package will be denoted by $\hat{\alpha}$. 

\subsection{The linear case with SGD}
Let us reconsider the simple case discussed in \S\ref{sec:LinearOptim} and illustrate power laws arising from SGD on ridge regression. As a particularly simple illustration, first consider the one-dimensional case of (\ref{eq:LinearRecurrence}) with $n = 1$, $\gamma = \frac12$, $\lambda = 0$, and standard normal synthetic data. The resulting Markov chain is 
\begin{equation}
\label{eq:LinearToy1D}
W_{k+1} = (1 - \tfrac12 X_k^2)W_k - \tfrac12 X_k Y_k,
\end{equation}
where $X_k,Y_k \iidsim \mathcal{N}(0,1)$.
Starting from $W_0 = 0$, Figure \ref{fig:LinearToy1D} shows a trace plot of $10^7$ iterations of (\ref{eq:LinearToy1D}). One can observe the sporadic spikes that are indicative of heavy-tailed fluctuations. Also in Figure \ref{fig:LinearToy1D} is an approximation of the probability density of magnitudes of the iterations. Both exponential and log-normal fits are obtained via maximum likelihood estimation and compared with the power law predicted from Theorem \ref{thm:Kesten} ($\alpha \approx 2.90$). 
Visually, the power law certainly provides the best fit. Using the Python package \texttt{powerlaw}, a Pareto-distribution was fitted to the iterations. The theoretical tail exponent falls within the 95\% confidence interval for the estimated tail exponent: $\hat{\alpha} = 2.95 \pm 0.06$. However, even for this simple case where the stationary distribution is known to exhibit a power law and a significant number of samples are available, a likelihood ratio test was found incapable of refuting a Kolmogorov-Smirnov lognormal fit to the tail. 

\begin{figure}
    \centering
    \includegraphics[width=\textwidth]{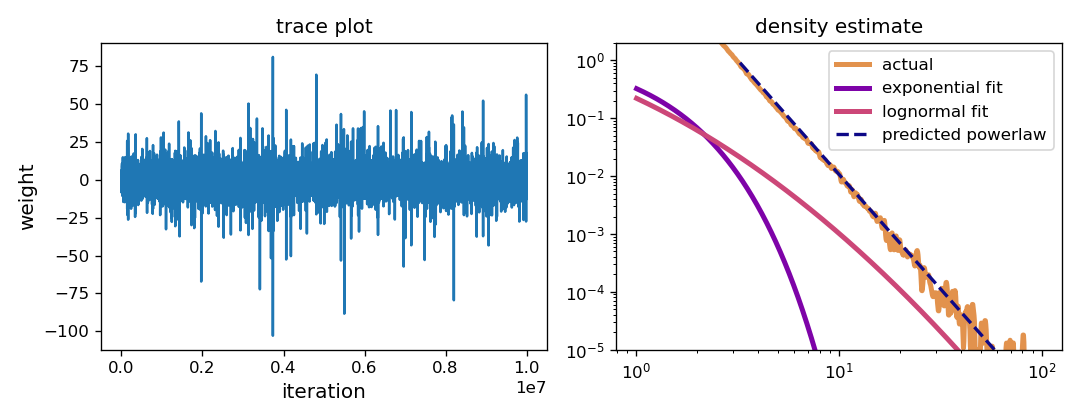}
    \caption{(Left) A trace plot of $10^7$ iterations of (\ref{eq:LinearToy1D}) and (right) a corresponding probability density estimate of their absolute values, with exponential, log-normal fits, and the power law predicted by Theorem \ref{thm:Kesten}.}
    \label{fig:LinearToy1D}
\end{figure}

As the dimension increases, the upper bound on the power law from Theorem \ref{thm:Kesten} becomes increasingly less tight. To see this, we conduct least-squares linear regression to the Wine Quality dataset \cite{cortez2009modeling} (12 attributes; 4898 instances) using vanilla SGD with step size $\gamma = 0.3$, $L^2$ regularization parameter $\lambda = 4$, and minibatch size $n=1$. These parameters are so chosen to ensure that the resulting sequence of iterates satisfying (\ref{eq:LinearRecurrence}) is just barely ergodic, and exhibits a profoundly heavy tail. Starting from standard normal $W_0$, Figure \ref{fig:LinearBig} shows a trace plot of $2.5$ million iterations, together with an approximation of the probability density of magnitudes of the iterations. A Pareto-distribution fit obtained using the \texttt{powerlaw} package is also drawn and can be seen to be an excellent match to the data; the corresponding estimated tail exponent is $\hat{\alpha} \approx 0.446 \pm 0.008$ to 95\% confidence. 
However, applying Theorem \ref{thm:Kesten} to the chain formed from every $12$ iterations reveals a much larger upper bound on the tail exponent: $\alpha \leq 22$.

\begin{figure}
    \centering
    \includegraphics[width=\textwidth]{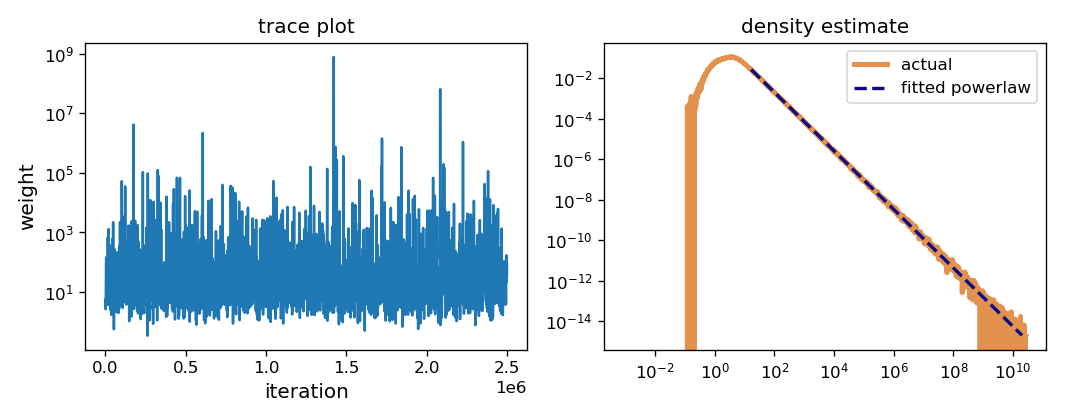}
    \caption{(Left) A trace plot of $2,500,000$ iterations of (\ref{eq:LinearRecurrence}) for the Wine Quality dataset and (right) a corresponding probability density estimate of their norms, with fitted Pareto distribution.}
    \label{fig:LinearBig}
\end{figure}

\subsection{Factors influencing tail exponents}

To help support the claims in \S\ref{sec:General} concerning the influence of factors on tail exponents, we conducted least-squares regression to the Wine Quality dataset \cite{cortez2009modeling} (12 attributes; 4898 instances) using a two-layer neural network with four hidden units and ReLU activation function. 
Our baseline stochastic optimizer is vanilla SGD with a constant step size of $\gamma = 0.025$, minibatch size $n=1$, and $L^2$ regularization parameter $\lambda = 10^{-4}$. The effect of changing each of these hyperparameters individually was examined. Three other factors were also considered: (i) the effect of smoothing input data by adding Gaussian noise with standard deviation $\epsilon$; (ii) the effect of adding momentum (and increasing this hyperparameter); and (iii) changing the optimizer between SGD, Adagrad, Adam, and subsampled Newton (SSN). 

\begin{table}
    \centering
\begin{tabular}{| c | c | c | c | c | c | c | c |}
\hline
\rowcolor{GrayH} \multicolumn{2}{|c|}{\textbf{step size}} & \multicolumn{2}{|c|}{\textbf{minibatch size}} & \multicolumn{2}{|c|}{$\boldsymbol{L^2}$\textbf{ regularization}} & \multicolumn{2}{|c|}{\textbf{data smoothing}} \\
\hline
\rowcolor{Gray} $\gamma$ & $\hat{\alpha}$ & $n$ & $\hat{\alpha}$ & $\lambda$ & $\hat{\alpha}$ & $\epsilon$ & $\hat{\alpha}$ \\
\hline
0.001 & 4.12 $\pm$ 0.04 & 10 & 5.99 $\pm$ 0.05 & $10^{-4}$ & 2.97 $\pm$ 0.03 & 0 & 2.97 $\pm$ 0.03\\
0.005 & 3.70 $\pm$ 0.02 & 5 & 4.98 $\pm$ 0.07 & 0.01 & 3.02 $\pm$ 0.02 & 0.1 & 2.96 $\pm$ 0.05\\
0.01 & 3.71 $\pm$ 0.04 & 2 & 3.62 $\pm$ 0.03 & 0.1 & 2.77 $\pm$ 0.01 & 0.5 & 3.05 $\pm$ 0.02 \\
0.025 & 2.97 $\pm$ 0.03 & 1 & 2.97 $\pm$ 0.03 & 0.2 & 2.55 $\pm$ 0.01 & 1 & 2.36 $\pm$ 0.13  \\
\hline
\end{tabular}
\begin{tabular}{| c | c | c | c |}
\hline
\rowcolor{GrayH} \multicolumn{2}{|c|}{\textbf{momentum}} & \multicolumn{2}{|c|}{\textbf{optimizer}} \\
\hline
\rowcolor{Gray} $\eta$ & $\hat{\alpha}$ &  & $\hat{\alpha}$ \\
\hline
0.5 & 4.99 $\pm$ 0.03 & Adagrad & 3.2 $\pm$ 0.1 \\
0.25 & 2.48 $\pm$ 0.02 & Adam & 2.119 $\pm$ 0.005 \\
0.1 & 2.84 $\pm$ 0.02 & SGD & 2.93 $\pm$ 0.03 \\
0 & 2.93 $\pm$ 0.03 & SSN & 0.79 $\pm$ 0.04 \\
\hline
\end{tabular}
    \caption{Estimated tail exponents for the distributions of norms of steps in roughly 2.5 million stochastic optimization iterations, varying only one hyperparameter from a baseline step size $\gamma = 0.025$, minibatch size $n = 1$, $L^2$ regularization parameter $\lambda = 10^{-4}$, no Gaussian perturbations to input data ($\epsilon = 0$), using SGD with momentum parameter $\eta = 0$. The Adagrad, Adam, and SSN optimizers are also considered, using the same baseline hyperparameters (and $\beta_1 = 0.9$, $\beta_2 = 0.999$ for Adam). }
    \label{tab:FactorsTailExp}
\end{table}
\begin{figure}
    \centering
    \includegraphics[width=\textwidth]{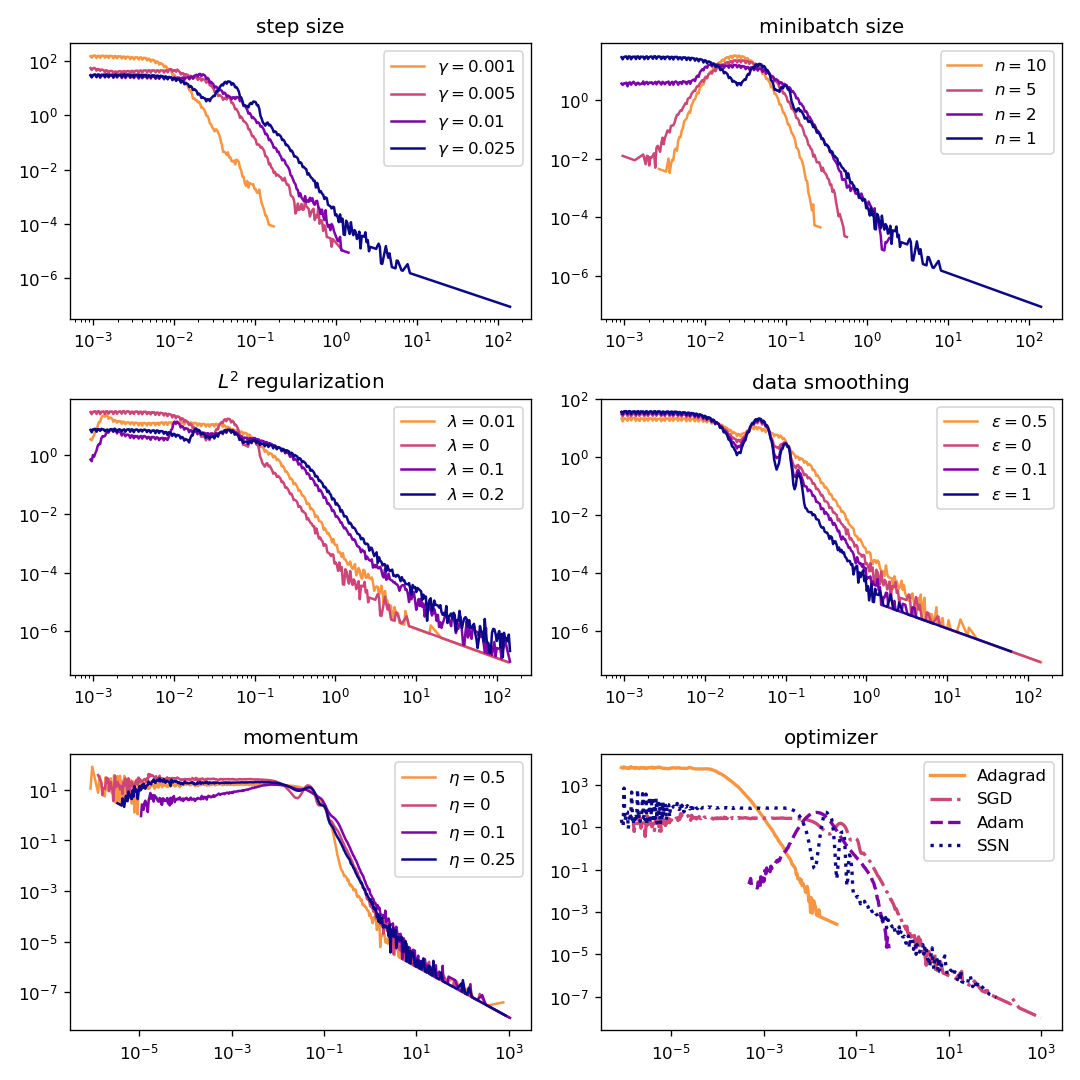}
    \caption{Density estimates of the distributions of norms of steps in roughly 2.5 million stochastic optimization iterations, varying only one hyperparameter from a baseline step size $\gamma = 0.025$, minibatch size $n = 1$, $L^2$ regularization parameter $\lambda = 10^{-4}$, no Gaussian perturbations to input data ($\epsilon = 0$), using SGD with momentum parameter $\eta = 0$. The Adagrad, Adam, and SSN optimizers are also considered, using the same baseline hyperparameters (and $\beta_1 = 0.9$, $\beta_2 = 0.999$ for Adam). Darker colours indicate smaller estimated tail exponents and heavier tails (from Table \ref{tab:FactorsTailExp}).
    }
    \label{fig:FactorsMain}
\end{figure}

In each case, we ran the stochastic optimizer for $500n$ epochs (roughly 2.5 million iterations). Instead of directly measuring norms of the weights, we prefer to look at norms of the steps $W_{k+1} - W_k$. There are two reasons for this: (1) unlike steps, the mode of the stationary distribution of the norms of the weights will not be close to zero, incurring a significant challenge to the estimation of tail exponents; and (2) if steps at stationarity are heavy-tailed in the sense of having infinite $\alpha$th moment, then the stationary distribution of the norms of the weights will have infinite $\alpha$th moment also. This is due to the triangle inequality: assuming $\{W_k\}_{k=1}^{\infty}$ is ergodic with $W_k \cd W_\infty$, $\|W_{\infty}\|_\alpha \geq \frac12 \limsup_{k\to\infty} \|W_{k+1} - W_k\|_\alpha$. 
Density estimates for the steps of each run, varying each factor individually, are displayed in Figure \ref{fig:FactorsMain}. Using the \texttt{powerlaw} package, tail exponents were estimated in each case, and are presented in Table \ref{tab:FactorsTailExp} as 95\% confidence intervals. As expected, both increasing step size and decreasing minibatch size can be seen to decrease tail exponents, resulting in heavier tails. Unfortunately, the situation is not as clear for the other factors; from Figure \ref{fig:FactorsMain}, we can see that this is possibly due in part to the unique shapes of the distributions, preventing effective estimates of the scale parameter, upon which the tail exponent (and its confidence intervals) are dependent. Nevertheless, there are a few comments that we can make. Firstly, the inclusion of momentum does not seem to prohibit heavy-tailed behaviour, even though the theory breaks down in these cases. On the other hand, as can be seen in Figure \ref{fig:FactorsMain}, Adam appears to exhibit very light tails compared to other optimizers. Adagrad exhibits heavy-tailed behaviour despite taking smaller steps on average. SSN shows the strongest heavy-tailed behaviour among all the stochastic optimizers considered. Increasing $L^2$ regularization does increase variance of the steps, but does not appear to make a significant difference to the tails in this test case. Similarly, the effect of adding noise to the data is unclear, although our claim that increasing dispersion of the data (which the addition of large amounts of noise would certainly do) results in heavier-tailed behaviour, is supported by the $\epsilon = 1$ case. 

\subsection{Non-Gaussian stationary behaviour in MobileNetV2}
To illustrate that heavy-tailed phenomena is not limited to small models, we highlight the presence of heavy tails in MobileNetV2 \cite{sandler2018mobilenetv2} trained to the CIFAR10 dataset \cite{krizhevsky2009learning} upscaled using bicubic filtering by a factor of 7. Applying transfer learning, we begin with pretrained weights for ImageNet, replacing the final fully-connected layer with a 10-class classifier and fine-tune the model, fixing all weights except the final layer. Norms of the steps $W_{k+1}-W_k$ for three epochs of vanilla SGD with minibatch size $n=1$, $L^2$ regularization parameter $\lambda = 5 \times 10^{-4}$, and a step size of $\gamma = 0.1$ (far larger than common choices for training MobileNetV2, instead chosen to exacerbate heavy tails for the purpose of illustration) were recorded. These are displayed in Figure \ref{fig:MobileNetV2}: on the left side is a density estimate obtained via pseudo-Gaussian smoothing with large bandwidth; on the right side is the histogram of steps on log-log scale together with the densities of corresponding exponential and log-normal fits, obtained via maximum likelihood estimation. A maximum likelihood estimate of the tail exponent $\alpha$ for a power law distributional fit reveals $\hat{\alpha} = 1.52 \pm 0.01$ to 95\% confidence. Evidently, steps are not light-tailed: a likelihood ratio test strongly supports rejection of the light-tailed (exponential) null hypothesis ($p < 10^{-8}$). Once again, we are unable to refute the hypothesis that the data comes from a log-normal distribution.
\begin{figure}
    \centering
    \includegraphics[width=\textwidth]{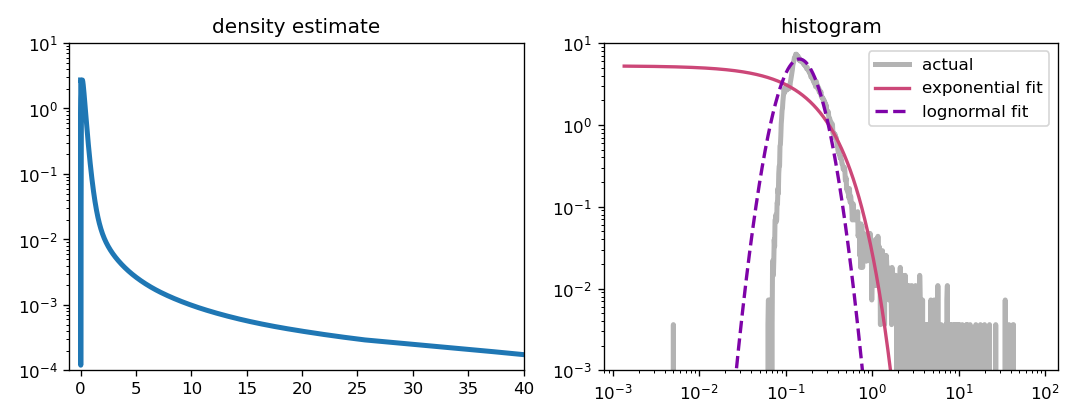}
    \caption{(Left) density estimate of norms of steps of vanilla SGD with MobileNetV2 on CIFAR10, and (right) histogram of steps with exponential and log-normal maximum likelihood fits}
    \label{fig:MobileNetV2}
\end{figure}

\section{Proofs}
\label{app:Proofs}

\begin{proof}[Proof of Lemma~\ref{lem:LinearHeavy}]
Observe that $\bar{X}_k = n^{-1}\sum_{i=1}^n X_{ik}X_{ik}^\top$ has full support on the space of square $d \times d$ matrices when $n \geq d$, and full support on rank-$n$ $d \times d$ matrices otherwise. In the former (over)determined case, both the smallest and largest singular values of $\bar{X}_k$ have full support on $[0,\infty)$, and hence the stationary distribution of (\ref{eq:LinearRecurrence}) decays as a power law. The latter underdetermined case is not so straightforward: while $\sigma_{\max}(A)$ still has full support on $[0,\infty)$, $\sigma_{\min}(A) \leq |1 - \lambda|$ almost surely. Instead, observe that the Markov chain that arises from taking $d$ steps is also a random linear recurrence relation with the same stationary distribution as (\ref{eq:LinearRecurrence}):
\[
W_{k+d} = A^{(d)}_k W_k + B^{(d)}_k,\quad \mbox{where}\quad A^{(d)}_k = A_{k+d}\cdots A_k,\quad B^{(d)}_k = \sum_{l=0}^{d-1}A_{k+d-1}\cdots A_{k+l+1} B_{k+l}.
\]
One may verify that $A^{(d)}$ has full support on the space of \emph{all} square $d \times d$ matrices, and hence, (\ref{eq:LinearRecurrence}) will also exhibit a stationary distribution with heavy tails.
\end{proof}

For the general case, our strategy is to prove that, for some $\alpha>0$, $\mathbb{E}|f(W_{k})|^{\alpha}$ diverges as $k\to\infty$. If we assume that $W$ is ergodic with limiting distribution $W_{\infty}$, then the $f$-norm ergodic theorem \cite[Theorem 14.0.1]{meyn2012markov} implies that $\mathbb{E}|f(W_{k})|^{\alpha}$ converges if and only if $\mathbb{E}|f(W_{\infty})|^{\alpha}$ is finite, hence the divergence of $\mathbb{E}|f(W_{k})|^{\alpha}$ implies $|f(W_{\infty})|$ has infinite $\alpha$-th moment. In particular, here is the proof of Lemma~\ref{lem:Abstract}. 

\begin{proof}[Proof of Lemma~\ref{lem:Abstract}]
It suffices to show that $\|f(W_\infty)\|_{\beta} =+\infty$ for any $\beta > \alpha$, where
\[
\alpha \coloneqq \inf_{\epsilon > 0}\frac{1}{\log(1+\epsilon)}\left|\log\inf_{w\in S}\mathbb{P}\left(\frac{|f(\Psi(w))|}{|f(w)|} > 1 + \epsilon\right)\right|.
\]
Let $\epsilon > 0$ be arbitrary. 
For $w \in S$, let $E_\epsilon(w)$ be the event that $|f(\Psi(w))| > (1+\epsilon)|f(w)|$. Also, let $p_\epsilon = \inf_{w\in S}\mathbb{P}(E_\epsilon(w)) > 0$ from the hypotheses. Since $\Psi$ is independent of $W_k$, by laws of conditional expectation, for any $\beta > 0$,
\begin{align*}
\|f(W_{k+1})\|_{\beta}^\beta &= \mathbb{E}[\mathbb{E}[|f(W_{k+1})|^\beta \, \vert \, W_k]] \\
&\geq \mathbb{E}[\mathbb{P}(E_\epsilon(W_k)\vert W_k) \mathbb{E}[|f(W_{k+1})|^\beta \, \vert \, W_k,E_{\epsilon}(W_k)]] \\
&\geq \mathbb{E}[\mathbb{P}(E_{\epsilon}(W_{k})\vert W_{k})(1+\epsilon)^{\alpha}|f(W_{k})|^{\beta}]\\
&\geq p_\epsilon (1+\epsilon)^\beta \|f(W_k)\|_\beta^\beta.
\end{align*}
For any $\beta > \alpha$, $p_\epsilon(1+\epsilon)^\beta > 1$,
and hence $\|f(W_k)\|_\beta$ diverges as $k\to\infty$. By the $f$-norm ergodic theorem, $|f(W_\infty)|^{\beta}$ cannot be integrable; if it were, then $\|f(W_k)\|_\beta$ would converge to $\|f(W_\infty)\|_\beta < +\infty$. 
\end{proof}

The arguments of \cite{alsmeyer2016stationary} almost imply Theorem~\ref{thm:Lipschitz}, but are incompatible with the conditions that $M_\Psi$ is non-negative, and $k_\Psi$ can be zero. Instead, more subtle arguments are required; for these, we draw inspiration from \cite{goldie1991implicit,goldie1996perpetuities}. 

\begin{proof}[Proof of Theorem~\ref{thm:Lipschitz}]
Since the probability measure of $\Psi$ is non-atomic, \cite[Theorem 2.1]{alsmeyer2003harris} implies that $\{W_k\}_{k=0}^{\infty}$ is Harris recurrent. Together with 
\cite[Theorem 3.2]{alsmeyer2003harris}, we find that $\{W_k\}_{k=0}^{\infty}$ is geometrically ergodic.
That $W_{\infty}$ satisfies the distributional fixed point equation $W_{\infty} \eqd \Psi(W_{\infty})$ is precisely Letac's principle \cite[Theorem 2.1]{goldie1991implicit}. 

We shall begin by proving (2). Recall that\footnote{This is readily shown using L'H\^{o}pital's rule.} $\log x = \lim_{s \to 0^+} s^{-1}(x^s - 1)$, and so
\[
\lim_{s\to 0^+}\frac{\mathbb{E}K_{\Psi}^s - 1}{s} = \mathbb{E}\log K_{\Psi} < 0.
\]
Therefore, there exists some $s > 0$ such that $\|K_{\Psi}\|_s < 1$. Using H\"{o}lder's inequality, one finds that $\|K_{\Psi}\|_{\beta-\epsilon} < 1$ for any $\epsilon > 0$. Likewise, since $K_{\Psi} < 1$ with positive probability, $k_{\Psi} < 1$ with positive probability also, and hence, $k_{\Psi} > 1$ with positive probability. Therefore, there exists $r > 0$ such that $\|k_{\Psi}\|_r > 1$, and so $\|k_{\Psi}\|_{\alpha+\epsilon} > 1$ for any $\epsilon > 0$.
We now consider similar arguments to the proof of Lemma~\ref{lem:Abstract}. Since $\{W_k\}_{k=0}^{\infty}$ is geometrically ergodic, by the $f$-norm ergodic theorem, for any $\gamma > 0$, $\|W_{\infty}\|_\gamma$ is finite if and only if $\|W_k\|_{\gamma}$ is bounded in $k$. Letting $0 < \epsilon < \delta$, for each $k=0,1,\dots$,
\[
\|W_{k+1}\|_{\beta-\epsilon} \leq \|K_{\Psi}\|_{\beta - \epsilon} \|W_k\|_{\beta - \epsilon} + \|K_{\Psi}\|_{\beta-\epsilon}\|w^{\ast}\| + \|\Psi(w^{\ast})\|_{\beta - \epsilon}.
\]
Note that $\beta < \alpha$, since $k_{\Psi} \leq K_{\Psi}$ almost surely. Therefore, $\|\Psi(w^{\ast})\|_{\beta -\epsilon} < \infty$, and since $\|K_{\Psi}\|_{\beta - \epsilon} < 1$, $\|W_k\|_{\beta-\epsilon}$ is bounded and $\|W_{\infty}\|_{\beta-\epsilon}$ is finite. By Markov's inequality, $\mathbb{P}(\|W_{\infty}\| > t) \leq \|W_{\infty}\|_{\beta-\epsilon}^{\beta-\epsilon} t^{-\beta+\epsilon}$ for all $t > 0$. 
On the other hand, for each $k=0,1,\dots,$
\[
\|W_{k+1}\|_{\alpha + \frac12\epsilon} \geq \|k_{\Psi}\|_{\alpha + \frac12\epsilon} \|W_k\|_{\alpha + \frac12\epsilon} - \|k_{\Psi}\|_{\alpha + \frac12\epsilon} \|w^{\ast}\| - \|\Psi(w^{\ast})\|_{\alpha+\frac12\epsilon},
\]
and so $\|W_\infty\|_{\alpha+\frac12\epsilon}$ is necessarily infinite. By Fubini's theorem, 
\[
\|W_{\infty}\|_{\alpha+\frac12\epsilon}^{\alpha+\frac12\epsilon} = (\alpha+\tfrac12\epsilon)\int_0^\infty t^{\alpha+\frac12\epsilon-1}\mathbb{P}(\|W_{\infty}\| > 
t) \dd t.
\]
Therefore, we cannot have that $\limsup_{t\to\infty} t^{\alpha+\epsilon}\mathbb{P}(\|W_{\infty}\| > t) = 0$, since this would imply
\[
\|W_{\infty}\|_{\alpha+\frac12\epsilon}^{\alpha+\frac12\epsilon} \leq (\alpha+\tfrac12\epsilon)\int_0^\infty t^{-1-\frac12\epsilon} \dd t < +\infty,
\]
and hence $\limsup_{t\to\infty} t^{\alpha+\epsilon}\mathbb{P}(\|W_{\infty}\| > t) > 0$. Repeating these arguments for $p$ in place of $\beta-\epsilon$ and $\alpha+\tfrac12 \epsilon$ implies statement (3). 

Turning now to a proof of (1), since we have already shown the upper bound, it remains to show that $\mathbb{P}(\|W_\infty\| > t) = \Omega(t^{-\mu})$ for some $\mu > 0$; by Lemma~\ref{lem:TailToWhole} this implies the claimed lower bound. We shall achieve this with the aid of Lemmas~\ref{lem:PowerLowerBound},~\ref{lem:RecurLowerBound}, and~\ref{lem:TailExpLowerBound}. First, since $\mathbb{P}(k_\Psi > 1) > 0$ and $x\mapsto \mathbb{P}(X > x)$ is right-continuous, there exists $\epsilon > 0$ such that $\mathbb{P}(k_\Psi > (1+\epsilon)^2) > 0$. In the sequel, we let $C_{\alpha,\epsilon}$ denote a constant dependent only on $\alpha,\epsilon$, not necessarily the same on each appearance. Let $B_\Psi = k_\Psi\|w^\ast\|+M_\Psi+\|\Psi(w^\ast)\|$. By Lemma~\ref{lem:PowerLowerBound},
\begin{align*}
\|W_{k+1}\|^\alpha &\geq (1+\epsilon)^{-\alpha} (\|W_{k+1}\|+\|\Psi(w^\ast)\|)^\alpha - C_{\alpha,\epsilon}\|\Psi(w^\ast)\|^\alpha \\
&\geq (1+\epsilon)^{-\alpha}\|W_{k+1} - \Psi(w^{\ast})\|^{\alpha} - C_{\alpha,\epsilon}\|\Psi(w^\ast)\|^\alpha \\
&\geq (1+\epsilon)^{-\alpha}(k_\Psi\|W_k\| - k_\Psi\|w^\ast\|-M_\Psi)^{\alpha} - C_{\alpha,\epsilon}\|\Psi(w^\ast)\|^\alpha \\
&\geq (1+\epsilon)^{-2\alpha}k_\Psi^\alpha \|W_k\|^\alpha - C_{\alpha,\epsilon}B_\Psi^\alpha.
\end{align*}
For $\alpha > 0$ and $k=0,1,\dots,$ let $f_k^\alpha(t) = \mathbb{E}[\|W_k\|^\alpha \ind_{\|W_k\|\leq t}]$. Then
\[
f_{k+1}^{\alpha}(t) \geq \mathbb{E}[k_\Psi^\alpha(1+\epsilon)^{-2\alpha} \ind_{\{\|W_k\|\leq t / K_\Psi -\|w^\ast\|\}}] - C_{\alpha,\epsilon}\mathbb{E}B_\Psi^\alpha.
\]
Let $c > 1$ be some constant sufficiently large so that $\mathbb{P}(k_\Psi > (1+\epsilon)^2 \vert K_\Psi \leq c) > 0$. We may now choose $\alpha > 0$ such that $\mathbb{E}[k_\Psi^\alpha(1+\epsilon)^{-2\alpha}\vert K_\Psi \leq c] \mathbb{P}(K_\Psi \leq c) \eqqcolon a > c$. Doing so, we find that
\[
f_{k+1}^{\alpha}(ct) \geq a f_k(t - \|w^{\ast}\|) - C_{\alpha,\epsilon}\mathbb{E}B_\Psi^\alpha,\qquad \mbox{for any }t > 0,\,k \geq 0.
\]
By the $f$-norm ergodic theorem, $f_k^{\alpha}(t) \to f^{\alpha}(t)$ pointwise as $k \to \infty$, where $f^{\alpha}(t) = \mathbb{E}[\|W_\infty\|^\alpha \ind_{\|W_\infty\| \leq t}]$. Therefore,
\[
f^{\alpha}(ct) \geq af(t - \|w^{\ast}\|) - C_{\alpha,\epsilon}\mathbb{E}B_\Psi^\alpha,\qquad \mbox{for any }t > 0.
\]
By Lemma~\ref{lem:RecurLowerBound}, this implies that there exists some $0 < \gamma <\alpha$ such that $\liminf_{t\to\infty} t^{-\gamma} f^{\alpha}(t) > 0$, which from Lemma~\ref{lem:TailExpLowerBound}, implies that $\mathbb{P}(\|W_\infty\|\geq t) = \Omega(t^{-\alpha(\alpha-\gamma)/\gamma})$.
\end{proof}

\begin{lemma}
\label{lem:TailToWhole}
Suppose that $\mathbb{P}(X>x)\geq Cx^{-\alpha}$ for all $x\geq x_{0}$. Then there exists $c>0$ such that $\mathbb{P}(X>x)\geq c(1+x)^{-\alpha}$ for all $x \geq 0$.
\end{lemma}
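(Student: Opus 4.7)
The plan is to split the range $[0,\infty)$ into two regions determined by the threshold $x_0$ and handle each separately, then take $c$ to be the minimum of the two constants produced.

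For $x \geq x_0$, I would use the hypothesis $\mathbb{P}(X > x) \geq C x^{-\alpha}$ together with the elementary inequality $(1+x)^{-\alpha} \leq x^{-\alpha}$ (valid for $x > 0$ and $\alpha > 0$, which holds here since $\alpha$ is a tail exponent). This immediately yields
\[
\mathbb{P}(X > x) \geq C x^{-\alpha} \geq C(1+x)^{-\alpha},
\]
so the constant $c = C$ suffices on $[x_0,\infty)$.

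For $0 \leq x < x_0$, I would appeal to monotonicity of the survival function: $\mathbb{P}(X > x) \geq \mathbb{P}(X > x_0) \geq C x_0^{-\alpha}$ by the hypothesis applied at $x_0$. Since $(1+x)^{-\alpha}$ is decreasing in $x$ on $[0,x_0)$ and thus bounded above by $1$, we get
\[
\mathbb{P}(X > x) \geq C x_0^{-\alpha} \geq C x_0^{-\alpha} (1+x)^{-\alpha},
\]
so $c = C x_0^{-\alpha}$ suffices on $[0,x_0)$. Taking $c = C \min\{1, x_0^{-\alpha}\} > 0$ then works uniformly in $x \geq 0$, completing the proof.

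There is essentially no obstacle here; the result is a routine extension of an asymptotic tail bound to a global bound, and the only minor point to note is that $\alpha > 0$ (tacit in the setting of tail exponents) is what justifies both $(1+x)^{-\alpha} \leq x^{-\alpha}$ for $x \geq x_0$ and $(1+x)^{-\alpha} \leq 1$ for $x \geq 0$. The result is used in the proof of Theorem~\ref{thm:Lipschitz} to lift a tail lower bound $\mathbb{P}(\|W_\infty\| > t) = \Omega(t^{-\mu})$ valid only for large $t$ into the form $C_\mu (1+t)^{-\mu}$ stated in part (1).
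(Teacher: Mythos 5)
Your proof is correct. The overall structure (splitting at $x_0$, handling $x\ge x_0$ via $(1+x)^{-\alpha}\le x^{-\alpha}$, and taking the minimum of two constants) matches the paper, but your treatment of the bounded region $[0,x_0)$ is genuinely different and simpler. The paper defines $C_0=\inf_{x\le x_0}\mathbb{P}(X>x)/(1+x)^{\alpha}$, assumes $C_0=0$, extracts a sequence $x_n\in[0,x_0]$ with $\mathbb{P}(X>x_n)\to 0$, passes to a convergent subsequence, and derives a contradiction at the limit point --- an argument that is somewhat delicate, since the survival function is only right-continuous and one must be a little careful in concluding $\mathbb{P}(X>x)=0$ at the subsequential limit. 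You instead observe directly that $\mathbb{P}(X>x)\ge \mathbb{P}(X>x_0)\ge Cx_0^{-\alpha}>0$ for $x<x_0$ by monotonicity of the survival function, together with $(1+x)^{-\alpha}\le 1$. This sidesteps the compactness argument entirely, yields an explicit constant $c=C\min\{1,x_0^{-\alpha}\}$, and is, if anything, more robust than the paper's version. (One tacit point, harmless here: the hypothesis forces $x_0>0$, since otherwise $Cx^{-\alpha}$ would be unbounded near $x=0$; this is needed for $x_0^{-\alpha}$ to be finite and for the $x\ge x_0$ step to make sense.)
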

\begin{proof}
Evidently, $\mathbb{P}(X>x)\geq C(1+x)^{-\alpha}$ for $x\geq x_{0}$. Treating the $x\leq x_{0}$ setting, let \[C_{0}=\inf_{x\leq x_{0}}\frac{\mathbb{P}(X>x)}{(1+x)^{\alpha}}.\] Assume $C_{0}=0$. Since $(1+x)^{\alpha}$ is bounded for all $x\geq0$, there exists a sequence $\{x_{n}\}_{n=1}^{\infty}\in[0,x_{0}]$ such that $\mathbb{P}(X>x_{n})\to0$. But since $\{x_{n}\}_{n=1}^{\infty}$ there exists some subsequence converging to a point $x\leq x_{0}$, which must satisfy $\mathbb{P}(X>x)=0$, contradicting our hypotheses. Therefore, $C_{0}\neq0$ and the result is shown for $c=\min\{C_{0},C\}$. 
\end{proof}

\begin{lemma}
\label{lem:PowerLowerBound}
For any $\alpha > 1$ and $\epsilon > 0$, there exists $C_{\alpha,\epsilon} > 0$ such that for any $x,y,z\geq 0$, 
\begin{align*}
    z^\alpha &\geq (1+\epsilon)^{-\alpha}(y+z)^\alpha - C_{\alpha,\epsilon} y^\alpha \\
    (x-y)_+^{\alpha} &\geq (1+\epsilon)^{-\alpha} x^\alpha - C_{\alpha,\epsilon} y^\alpha.
\end{align*}
\end{lemma}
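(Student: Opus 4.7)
The plan is to prove both bounds by a simple two-case dichotomy, splitting on whether $y$ is small or large relative to the other variable, and then choosing $C_{\alpha,\epsilon}$ large enough that the ``large $y$'' case becomes trivial because the right-hand side is already non-positive.

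For the first inequality, I would consider the cases $y \leq \epsilon z$ and $y > \epsilon z$ separately. In the former case, $y+z \leq (1+\epsilon)z$ immediately gives $(1+\epsilon)^{-\alpha}(y+z)^\alpha \leq z^\alpha$, so the claim holds with any non-negative $C_{\alpha,\epsilon}$. In the latter case, $z < y/\epsilon$, so $y+z < y(1+\epsilon)/\epsilon$, hence $(1+\epsilon)^{-\alpha}(y+z)^\alpha < \epsilon^{-\alpha}y^\alpha$; provided $C_{\alpha,\epsilon} \geq \epsilon^{-\alpha}$, the right-hand side is non-positive and the inequality holds since $z^\alpha \geq 0$.

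For the second inequality the same idea applies, this time splitting on whether $y \leq \tfrac{\epsilon}{1+\epsilon} x$ or not. If $y \leq \tfrac{\epsilon}{1+\epsilon} x$, then $x - y \geq x/(1+\epsilon) \geq 0$, so $(x-y)_+^\alpha \geq (1+\epsilon)^{-\alpha} x^\alpha$ directly. Otherwise $x < (1+\epsilon)y/\epsilon$, so $(1+\epsilon)^{-\alpha} x^\alpha < \epsilon^{-\alpha} y^\alpha$, and again the right-hand side is non-positive as soon as $C_{\alpha,\epsilon} \geq \epsilon^{-\alpha}$, while $(x-y)_+^\alpha \geq 0$ trivially.

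Taking $C_{\alpha,\epsilon} = \epsilon^{-\alpha}$ (or any larger constant) therefore handles both inequalities uniformly. There is really no obstacle here beyond picking the right threshold for the case split; the hypothesis $\alpha > 1$ is actually not needed, and the argument goes through for all $\alpha > 0$. I would present the proof in a single short paragraph per inequality, and note that the same constant $C_{\alpha,\epsilon} = \epsilon^{-\alpha}$ works for both.
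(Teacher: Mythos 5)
Your proof is correct, and every step checks out: in the ``small $y$'' cases the $C_{\alpha,\epsilon}y^\alpha$ term is not even needed, and in the ``large $y$'' cases your choice $C_{\alpha,\epsilon}=\epsilon^{-\alpha}$ makes the right-hand side non-positive, so both inequalities hold trivially. The paper proceeds a little differently: it first reduces the second inequality to the first by substituting $z=(x-y)_+$ (using $y+(x-y)_+\geq x$), and then proves the first by homogeneity, writing $z=Ly$ and showing $\sup_{L\geq 0}\bigl[(1+L)^\alpha-(1+\epsilon)^\alpha L^\alpha\bigr]<\infty$ because the bracket is negative for large $L$. The two arguments exploit the same underlying fact --- the bound is only at issue when $y$ is comparable to the other variable, and there the $y^\alpha$ term absorbs everything --- but yours is more explicit: it produces the concrete constant $C_{\alpha,\epsilon}=\epsilon^{-\alpha}$ rather than an unspecified supremum, and it handles each inequality in one line. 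Your observation that the hypothesis $\alpha>1$ is never used is also accurate; the paper's own argument likewise goes through for all $\alpha>0$, so nothing is lost either way.
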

\begin{proof}
The second of these two inequalities follows from the first by taking $(x-y)_+ = z$. Since the first inequality is trivially the case when $y = 0$, letting $\rho = (1+\epsilon)^\alpha$, it suffices to show that
\[
\sup_{z \geq 0,\,y > 0} \frac{(y+z)^{\alpha} - \rho z^{\alpha}}{\rho y^{\alpha}} < \infty.
\]
Equivalently, parameterizing $z = L y$ where $L \geq 0$, it suffices that $\sup_{L \geq 0}[(1+L)^\alpha - \rho L^\alpha] < \infty$, which is evidently the case since $(1 + L^{-1})^{\alpha} - \rho < 0$ for sufficiently large $L > 0$. 
\end{proof}

\begin{lemma}
\label{lem:RecurLowerBound}
Let $f(t)$ be an unbounded non-decreasing function. If there exists some $a\geq c>1$ and $b,x,t_0\geq 0$ such that for $t \geq t_0$,
\begin{equation}
\label{eq:FuncLinearRecur}
    f(ct)\geq af(t-x)-b,
\end{equation}
then $\liminf_{t\to\infty}t^{-\gamma}f(t)>0$ for any $\gamma < \frac{\log a}{\log c}$. 
\end{lemma}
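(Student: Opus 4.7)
The plan is to iterate the recurrence (\ref{eq:FuncLinearRecur}) $n$ times and then exploit the unboundedness of $f$ to extract geometric growth. Concretely, I would start from $f(c^n T)$ and apply (\ref{eq:FuncLinearRecur}) repeatedly, each time writing the current argument as $c$ times something smaller. Tracking the subtractions and the geometric sum of $-b$ terms yields
\[
f(c^n T) \;\geq\; a^n\,f\!\left(T - x\,\frac{1 - c^{-n}}{1 - c^{-1}}\right) - b\,\frac{a^n - 1}{a - 1}.
\]
At the $k$-th iteration the recurrence is applied at $s_k = c^{n-k}T - x(1 - c^{-(k-1)})/(c-1)$, and $s_k \geq t_0$ for every $k \in \{1,\dots,n\}$ and every $n$ once $T \geq T_0 := t_0 + x/(c-1)$, which handles validity uniformly.

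Next, since $f$ is non-decreasing and $(1 - c^{-n})/(1 - c^{-1}) \leq c/(c-1)$, the right-hand side is at least $a^n f(T - xc/(c-1)) - b(a^n - 1)/(a-1)$. Using unboundedness of $f$, I would choose $T^{\ast} \geq T_0$ large enough that $f(T^{\ast} - xc/(c-1)) > b/(a-1)$, and set $\delta := f(T^{\ast} - xc/(c-1)) - b/(a-1) > 0$. Rearranging gives
\[
f(c^n T^{\ast}) \;\geq\; a^n \delta + \tfrac{b}{a-1} \;\geq\; a^n \delta \qquad \text{for every } n \geq 0.
\]

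Finally, for general $s \geq T^{\ast}$, let $n$ be the largest integer with $c^n T^{\ast} \leq s$; then $c^n \geq s/(c T^{\ast})$, so $a^n \geq (s/T^{\ast})^{\log a/\log c}/a$, and monotonicity of $f$ gives
\[
f(s) \;\geq\; f(c^n T^{\ast}) \;\geq\; \delta\, a^n \;\geq\; \frac{\delta}{a}\left(\frac{s}{T^{\ast}}\right)^{\log a/\log c}.
\]
Hence $\liminf_{s\to\infty} s^{-\log a/\log c}\, f(s) > 0$, which is actually strictly stronger than the stated conclusion for any $\gamma < \log a/\log c$ (the exponent is achieved, not just approached).

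The main obstacle is the bookkeeping in the first step: verifying that the chain of substitutions $s_k = c^{n-k}T - x\sum_{j<k} c^{-j}$ stays in $[t_0,\infty)$ uniformly in $n$, which is exactly why the threshold $T_0 = t_0 + x/(c-1)$ emerges and why one then needs $T^{\ast}$ far enough beyond $T_0$ so that both validity and $\delta > 0$ hold simultaneously. After that, unboundedness of $f$ alone is enough to overwhelm the subtracted $b/(a-1)$ and force the geometric lower bound.
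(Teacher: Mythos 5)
Your proof is correct, and it follows a genuinely different route for the only delicate part of the lemma, namely the shift $x>0$. The paper first proves the case $x=0$ by straightforward iteration, and then handles $x>0$ by choosing $t$ large enough that $t-x\geq(1-\epsilon)t$, which converts the recurrence into $f\bigl(\tfrac{c}{1-\epsilon}t\bigr)\geq af(t)-b$; applying the $x=0$ case with the inflated contraction constant $c/(1-\epsilon)$ costs a factor $\frac{\log c}{\log c+|\log(1-\epsilon)|}$ in the exponent, which is why the paper only obtains $\liminf_{t\to\infty}t^{-\gamma}f(t)>0$ for $\gamma$ \emph{strictly} below $\log a/\log c$ (letting $\epsilon\to0$ recovers the full range of such $\gamma$). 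You instead iterate the recurrence with the shift kept explicit, observing that the accumulated displacement $x\sum_{j}c^{-j}$ is bounded by $xc/(c-1)$ uniformly in $n$; this both supplies the uniform validity threshold $T_0=t_0+x/(c-1)$ and avoids any loss in the exponent, so you obtain the endpoint bound $f(s)\geq\frac{\delta}{a}(s/T^{\ast})^{\log a/\log c}$, which is strictly stronger than the stated conclusion. Your bookkeeping checks out: the points $s_k=c^{n-k}T-x(1-c^{-(k-1)})/(c-1)$ satisfy $s_k\geq T-x/(c-1)\geq t_0$ for all $1\leq k\leq n$ once $T\geq T_0$, the geometric sum of the $-b$ terms is exactly $b(a^n-1)/(a-1)$, and the final conversion from the subsequence $c^nT^{\ast}$ to all large $s$ via monotonicity is standard. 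Either argument suffices for the application in Theorem~\ref{thm:Lipschitz}, but your version is cleaner in that it dispenses with the $\epsilon$-limiting step.
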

\begin{proof}
First, consider the case $x = 0$. Iterating (\ref{eq:FuncLinearRecur}), for any $n=1,2,\dots,$ and $t > t_0$,
\[
f(c^n t) \geq \left(f(t) - \frac{b}{a-1}\right) a^n + \frac{b}{a-1}.
\]
Let $t_1$ be sufficiently large so that $t_1 > t_0$ and $f(t) > \frac{b}{a-1}$ for all $t > t_1$. Then for any $\alpha > 0$, letting $[\alpha]$ denote the largest integer less than or equal to $\alpha$,
\begin{align*}
f(c^{\alpha} t_1) &\geq \left(f(c^{\alpha - [\alpha]} t_1) - \frac{b}{a-1}\right) a^{[\alpha]} + \frac{b}{a-1}, \\
&\geq \left(f(t_1) - \frac{b}{a-1}\right) a^{\alpha - 1} + \frac{b}{a-1}.
\end{align*}
In particular, by choosing $\alpha = \frac{\log t - \log t_1}{\log c}$, $c^\alpha t_1 = t$, and so for any $t > t_1$,
\[
f(t) \geq a^{-\frac{\log t_1}{\log c} - 1} \left(f(t_1) - \frac{b}{a-1}\right) t^{\frac{\log a}{\log c}} + \frac{b}{a-1}.
\]
Now, let $t_2$ be sufficiently large so that $t_2 \geq t_1$ and
\[
a^{-\frac{\log t_1}{\log c} - 1} \left(f(t_1) - \frac{b}{a-1}\right) t_2^{\frac{\log a}{\log c}} \geq \frac{b}{a - 1}.
\]
Then for $t \geq t_2$ and $C = 2 a^{-\frac{\log t_1}{\log c} - 1} (f(t_1) - \frac{b}{a-1})$,
$
f(t) \geq C t^{\frac{\log a}{\log c}},
$
and the result follows.
Now, suppose that $x > 0$. Let $0 < \epsilon < 1$ and take $t_1 = \max\{t_0, x / \epsilon\}$ so that $t - x \geq (1-\epsilon) t$ for all $t \geq t_1$. Then, for all $t \geq t_1$, 
\[
f\left(\frac{c}{1-\epsilon} \cdot t\right) \geq a f(t) - b.
\]
If we can show the conclusion for the case where $x = 0$, then for $\gamma < \frac{\log a}{\log c}$,
\[
\liminf_{t \to \infty} t^{-\gamma \cdot \frac{\log c}{\log c + |\log(1-\epsilon)|}} f(t) > 0.
\]
Since $\epsilon > 0$ was arbitrary, the result follows. 
\end{proof}

\begin{lemma}
\label{lem:TailExpLowerBound}
Let $X$ be a non-negative random variable and $\alpha > 0$. If there exists some $0 < \gamma < \alpha$ such that
\[
\liminf_{t\to\infty} t^{-\gamma} \mathbb{E}[X^{\alpha} \ind_{X \leq t}] > 0,
\]
then there is some $C, t_0 > 0$ such that for $t \geq t_0$,
\[
\mathbb{P}(X \geq t) \geq C t^{-\frac{\alpha}{\gamma}(\alpha - \gamma)}.
\]
\end{lemma}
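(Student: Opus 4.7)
The plan is to bound the truncated $\alpha$th moment $\mathbb{E}[X^\alpha \ind_{X \leq t}]$ above in terms of a tail probability at a carefully chosen cutoff, then invert. The key elementary observation is that for any $0 \leq s \leq t$,
\[
\mathbb{E}[X^\alpha \ind_{X \leq t}] = \mathbb{E}[X^\alpha \ind_{X \leq s}] + \mathbb{E}[X^\alpha \ind_{s < X \leq t}] \leq s^\alpha + t^\alpha \mathbb{P}(X > s),
\]
since on the event $\{s < X \leq t\}$ we have $X^\alpha \leq t^\alpha$. Combined with the hypothesis that $\mathbb{E}[X^\alpha \ind_{X \leq t}] \geq C_1 t^\gamma$ for some $C_1 > 0$ and all $t$ sufficiently large, this yields
\[
t^\alpha \mathbb{P}(X > s) \geq C_1 t^\gamma - s^\alpha.
\]

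The next step is to pick the cutoff so that $s^\alpha$ is a strict fraction of $C_1 t^\gamma$. The natural scaling is $s = c\, t^{\gamma/\alpha}$ for a small constant $c > 0$: then $s^\alpha = c^\alpha t^\gamma$, and choosing $c^\alpha = C_1/2$ gives
\[
\mathbb{P}(X > s) \geq \tfrac{1}{2} C_1\, t^{\gamma - \alpha}.
\]
Inverting the relation $s = c\, t^{\gamma/\alpha}$ as $t = (s/c)^{\alpha/\gamma}$, we obtain
\[
\mathbb{P}(X > s) \geq \tfrac{1}{2} C_1 \left(\tfrac{s}{c}\right)^{\frac{\alpha}{\gamma}(\gamma - \alpha)} = C\, s^{-\frac{\alpha(\alpha - \gamma)}{\gamma}},
\]
where $C = \tfrac{1}{2} C_1 c^{\alpha(\alpha - \gamma)/\gamma}$. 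Since $\gamma < \alpha$, the relation $s = c\, t^{\gamma/\alpha}$ sends $t \to \infty$ to $s \to \infty$, so the bound holds for all $s$ sufficiently large.

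There is no real obstacle here: the only mild care is ensuring the hypothesis $\liminf_{t\to\infty} t^{-\gamma} \mathbb{E}[X^\alpha \ind_{X \leq t}] > 0$ is applied only for $t \geq t_0'$ large enough that the liminf estimate kicks in, and then translating the resulting lower bound in $t$ back to the $s$-variable through the substitution $s = c\, t^{\gamma/\alpha}$. The slack in the exponent (namely $\frac{\alpha(\alpha-\gamma)}{\gamma}$ rather than the heuristically tight value $\alpha - \gamma$) is precisely the cost of the crude bound $X^\alpha \leq t^\alpha$ on the event $\{s < X \leq t\}$; it is enough for the application in Theorem~\ref{thm:Lipschitz}.
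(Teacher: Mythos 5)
Your proof is correct and follows essentially the same strategy as the paper's: split the truncated moment at an intermediate scale, bound the lower piece by the power of the cutoff and the upper piece by $t^\alpha$ times a tail probability, then choose the ratio of scales to balance the terms, landing on the same exponent $\tfrac{\alpha}{\gamma}(\alpha-\gamma)$. The only cosmetic difference is that you perform the split directly on the expectation with indicators, whereas the paper routes through the layer-cake representation $\mathbb{E}[X^\alpha\ind_{X\leq t}] \leq \alpha\int_0^t \mathbb{P}(X\geq v)v^{\alpha-1}\,\dd v$ before splitting the integral; your version is marginally more elementary but the content is identical.
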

\begin{proof}
By taking $t$ to be sufficiently large, there exists $c_1 > 0$ such that
\begin{align*}
c_1 t^{\gamma} \leq \mathbb{E}[X^{\alpha} \ind_{X \leq t}] &= \alpha \int_0^\infty \left(\int_0^\infty \ind_{u \leq t} \dd \mathbb{P}_X(u)\right) \ind_{v \leq u} v^{\alpha - 1} \dd v \\
&=\alpha\int_0^t \mathbb{P}(v \leq X \leq t) v^{\alpha -1 } \dd v \leq \alpha \int_0^t \mathbb{P}(X \geq v) v^{\alpha - 1} \dd v.
\end{align*}
On the other hand, observe that for any $b, t > 1$,
\begin{align*}
\alpha \int_0^{bt} \mathbb{P}(X \geq v) v^{\alpha -1 } \dd v &\leq \alpha \int_0^t v^{\alpha - 1} \dd v + \alpha \mathbb{P}(X \geq t)\int_t^{bt} v^{\alpha - 1} \dd v \\
&= t^{\alpha} [1 + \mathbb{P}(X \geq t)(b^\alpha - 1)].
\end{align*}
Therefore,
\[
\frac{c b^\gamma t^{\gamma - \alpha} - 1}{b^\alpha - 1} \leq \mathbb{P}(X \geq t).
\]
Choosing $b = (2 / c)t^{(\alpha - \gamma)/\gamma}$ such that $c b^\gamma t^{\gamma - \alpha} - 1 = 1$, the lemma follows.
\end{proof}

\end{document}